\definecolor{RoyalBlue}{RGB}{65,105,225}
\definecolor{MyRed}{RGB}{225,0,0}
\newcolumntype{C}{>{\centering\arraybackslash}X}
\newcolumntype{L}{>{\raggedright\arraybackslash}X}
\newcolumntype{R}{>{\raggedleft\arraybackslash}X}
\newcolumntype{J}{>{\justifying\arraybackslash}X}
\newcolumntype{P}[1]{>{\centering\arraybackslash}p{#1}}
\newcolumntype{M}[1]{>{\centering\arraybackslash}m{#1}}
\newcolumntype{B}[1]{>{\centering\arraybackslash}b{#1}}
\theoremstyle{plain}
\newtheorem{theorem}{Theorem}[section]
\newtheorem{lemma}[theorem]{Lemma}
\newtheorem{corollary}[theorem]{Corollary}
\theoremstyle{definition}
\newtheorem{definition}[theorem]{Definition}
\newtheorem{ass}[theorem]{Assumption}
\theoremstyle{remark}
\newtheorem{remark}[theorem]{Remark}
\def\eqref#1{equation (\ref{#1})}
\def\1{\bm{1}}
\def\vlam{{\bm{\lambda}}}
\def\vJ{{\bm{J}}}
\def\vf{{\bm{f}}}
\def\vr{{\bm{r}}}
\def\vu{{\bm{u}}}
\def\vv{{\bm{v}}}
\def\vz{{\bm{z}}}
\def\mA{{\bm{A}}}
\def\mB{{\bm{B}}}
\def\mU{{\bm{U}}}
\def\mV{{\bm{V}}}
\def\mW{{\bm{W}}}
\def\mSigma{{\bm{\Sigma}}}
\DeclareMathAlphabet{\mathsfit}{\encodingdefault}{\sfdefault}{m}{sl}
\SetMathAlphabet{\mathsfit}{bold}{\encodingdefault}{\sfdefault}{bx}{n}
\def\sJ{{\mathbb{J}}}
\newcommand{\E}{\mathbb{E}}
\newcommand{\defeq}{\vcentcolon=}
\newcommand{\tb}{\textbf}
\newcommand{\pf}{ {\mathcal{T}} }
\newcommand{\ccs}{ \mathrm{CCS} }
\newcommand{\bee}{\begin{equation}\begin{aligned}}
\newcommand{\ee}{\end{aligned}\end{equation}}
\newcommand{\ie}{\textit{i}.\textit{e}.}
\newcommand{\eg}{\textit{e}.\textit{g}.}
\title{Panacea: Pareto Alignment via Preference Adaptation for LLMs}
\author{%
  Yifan Zhong$^{1, 2}$\thanks{Equal contribution. $^{1}$Institute for Artificial Intelligence, Peking University. $^{2}$National Key Laboratory of General Artificial Intelligence, BIGAI. $^{3}$Department of Computer Science, City University of Hong Kong. $^{4}$Yuanpei College, Peking University. Correspondence to: Yaodong Yang <yaodong.yang@pku.edu.cn>}\ \ ,\  Chengdong Ma$^{1*}$,\ Xiaoyuan Zhang$^{3*}$,\ Ziran Yang$^{4}$, Haojun Chen$^{1}$ \\ \textbf{Qingfu Zhang}$^{3}$\textbf{, Siyuan Qi}$^{2}$\textbf{, Yaodong Yang}$^{1}$ \\
}
\begin{document}

\maketitle

\vspace{-10pt}
\begin{abstract}
\vspace{-4pt}
Current methods for large language model alignment typically use scalar human preference labels. However, this convention tends to oversimplify the multi-dimensional and heterogeneous nature of human preferences, leading to reduced expressivity and even misalignment. This paper presents Panacea, an innovative approach that reframes alignment as a multi-dimensional preference optimization problem. Panacea trains a single model capable of adapting online and Pareto-optimally to diverse sets of preferences without the need for further tuning. A major challenge here is using a low-dimensional preference vector to guide the model's behavior, despite it being governed by an overwhelmingly large number of parameters. To address this, Panacea is designed to use singular value decomposition (SVD)-based low-rank adaptation, which allows the preference vector to be simply injected online as singular values. Theoretically, we prove that Panacea recovers the entire Pareto front with common loss aggregation methods under mild conditions. Moreover, our experiments demonstrate, for the first time, the feasibility of aligning a single LLM to represent an exponentially vast spectrum of human preferences through various optimization methods. Our work marks a step forward in effectively and efficiently aligning models to diverse and intricate human preferences in a controllable and Pareto-optimal manner.

\end{abstract}

\vspace{-8pt}

\section{Introduction}
\vspace{-4pt}
AI alignment aims to ensure AI systems align with human intentions, and there has been notable progress in this area, especially for large language models (LLMs)~\cite{ji2023ai, casper2023open, kaufmann2023survey,achiam2023gpt}. The prevailing approach for LLM alignment involves curating a dataset $\{(x, y_1, y_2, z)\}$, where each prompt $x$ is associated with a pair of responses $(y_1, y_2)$ and a scalar label $z \in \{0,1\}$ that indicates if $y_1$ is a ``better'' response. These labels are typically generated based on detailed guidelines that encompass various criteria, reflecting multiple dimensions $i \in \{1, \cdots, m\}$ of human preferences (\eg, helpfulness, harmlessness, conciseness, humor, formality). Pre-trained models are subsequently further optimized on this dataset using methods including reinforcement learning, supervised learning, or game-theoretical approaches~\cite{jaques2019way,ouyang2022training,lee2023rlaif,bai2022constitutional,rafailov2023direct,azar2023general,swamy2024minimaximalist,munos2023nash}. However, this \emph{single-objective alignment} methodology may not fully capture the complexity of real-world scenarios for two reasons (\Cref{fig:scalar-limitations}).

\textbf{First}, this method can lead to inconsistency and ambiguity in \textbf{data labels}. Human labelers assign scalar labels $z$ by \textit{implicitly} evaluating responses across every dimension $i$ with \emph{different preference weights} to $i$,
and reaching a final judgment. These differences often result in conflicting labels, causing misalignment or learning failures (\Cref{app:single-analysis}), substantiated by the low average label agreement reported in \cite{bai2022training}.
\textbf{Second}, optimizing a single objective leads to only one \tb{model} that attempts to fit the potentially conflicting labeling preferences, \ie, the helpfulness-harmlessness dilemma. This single model may not cover the full spectrum of human preferences across all dimensions, thereby exacerbating biases against underrepresented groups and failing to meet diverse user needs.

\begin{figure*}
    \centering
    \includegraphics[width=1\textwidth]{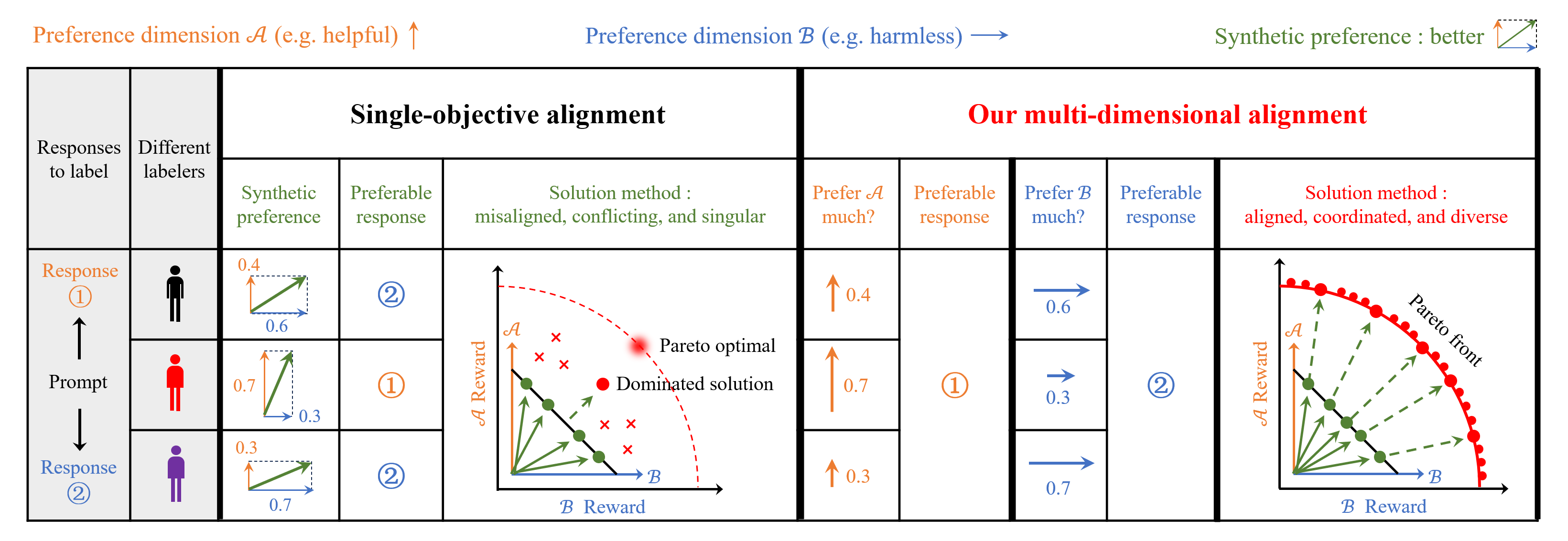}
    \vspace{-17pt}
    \caption{Comparison of the predominant single-objective alignment and our multi-dimensional alignment. For the two responses to a prompt, labelers agree on the preferable one in each preference dimension, but conflict when assigning a synthesized scalar label denoting which is ``better''. This arises due to the inherently different preference weights held by labelers, a common case in reality. Performing single-objective optimization on the potentially conflicting scalar-label dataset (left) could lead to a dominated solution and misalignment. By contrast, our method, Panacea, leverages multi-dimensional preference optimization (right) on the consistent multi-dimensional dataset and learns the entire Pareto front (PF), thereby aligning with diverse and complex human preferences.}
    \label{fig:scalar-limitations}
    \vspace{-13pt}
\end{figure*}

To address these challenges, we formulate the alignment as a multi-dimensional preference optimization (MDPO) problem. By \emph{explicitly} curating data for each dimension, we enhance data consistency and simplify the labeling process, thereby \textbf{overcoming the first limitation}. 

Upon the obtained dataset, our goal is to concurrently optimize across all dimensions. 
However, this is often infeasible due to potential conflicts among preferences (\eg, helpfulness \emph{vs.} harmlessness in response to hazardous user requests).
Therefore, we aim for Pareto-optimality \cite{miettinen1999nonlinear}, which means finding solutions where no preference dimension can be made better off without making another worse off. However, many Pareto-optimal solutions might exist. Instead of just learning one such solution, we focus on learning the entire set of Pareto-optimal solutions. To achieve this, we use a single model capable of recovering any Pareto-optimal solution by inputting the appropriate preference vector.

In this paper, we propose Panacea (\textbf{P}areto \textbf{a}lig\textbf{n}ment vi\textbf{a} preferen\textbf{ce} \textbf{a}daptation), a simple yet effective method that:
1) learns the entire Pareto-optimal solution set for all possible preferences with a single model, and 2) infers Pareto-optimal responses online by simply injecting any preference vector into the model. Our method, providing a comprehensive representation of human preferences, effectively caters to diverse user needs, thus \textbf{mitigating the second limitation} (\Cref{fig:scalar-limitations}).

A key challenge lies in how to utilize a low-dimensional preference vector to control the model's behavior.
Our core insight is that, similar to the crucial role of the preference vector in shaping the Pareto solution, singular values are pivotal in defining the model's fundamental behavior in a singular value decomposition (SVD)-based low-rank adaptation (LoRA)\cite{hu2022lora,zhang2023adaptive}.
To address the above challenge, we incorporate the preference vector into the singular values within each SVD-LoRA layer. We then scale it using a learnable factor to align with the magnitude of other singular values. The model is trained end-to-end using a joint objective function aggregated according to the preference vector. The flexibility of Panacea enables seamless compatibility with various preference optimization procedures, \eg, supervised fine-tuning (SFT), reinforcement learning from human feedback (RLHF) \cite{ouyang2022training}, and direct preference optimization (DPO) \cite{rafailov2023direct}, and diverse methods for loss aggregation, \eg, linear scalarization (LS) \citep{boyd2004convex}[Section 4.7.5] and weighted Tchebycheff (Tche) \citep{miettinen1999nonlinear}[Section 3.4]. Through theoretical analysis, we confirm that Panacea can effectively capture the entire Pareto front (PF) under practical conditions. This finding provides a solid rationale for training a single Pareto set model to learn all Pareto optimal solutions across the entire preference space.

In our experiments, we assess the effectiveness and scalability of Panacea on several significant and challenging preference alignment problems with up to 10 dimensions, where the Pareto set cardinality grows exponentially with the number of dimensions, considerably surpassing the scope of current research. Panacea consistently outperforms baseline methods, producing superior, uniformly distributed, and convex fronts in accordance with the theory. Quantitative metrics highlight its substantial advantages, demonstrating an order-of-magnitude improvement. Notably, Panacea exhibits no performance saturation even on the ten-dimensional problem, indicating its extensive potential. For the first time, we show the possibility of aligning a \emph{single} model with \emph{exponentially many} heterogeneous preferences, opening up a promising avenue for LLM alignment.

 

This paper makes three main contributions. \textbf{First}, we identify the fundamental limitations of the predominant scalar-label, single-objective alignment paradigm, and propose to reframe alignment as a multi-dimensional preference optimization problem. \textbf{Second}, we design Panacea, a simple yet effective method that learns one single model that can online and Pareto-optimally adapt to any set of preferences, without the need for further tuning. \textbf{Third}, we provide theoretical supports and empirical validations to demonstrate the Pareto optimality, scalability, efficiency, and simplicity of Panacea, thereby satisfying the urgent need for Pareto alignment to diverse human preferences.

\vspace{-5pt}
\section{Related Work}
\vspace{-5pt}
\textbf{Pareto Set Learning.} Different from previous classical multi-objective optimization (MOO) methods \cite{zhou2011multiobjective,lin2019pareto,liu2021profiling,zhang2007moea} that use a finite set of solutions (referred to as ``particles") to approximate the entire Pareto set, Pareto set learning (PSL) \cite{navon2020learning,lin2020controllable,zhang2023hypervolume} aims to use a single model to recover the complete Pareto set/front. The advantage of PSL is that it can store an infinite number of Pareto solutions within a model. This allows users to specify their own preferences, and the model can dynamically output a particular Pareto solution in real-time according to those preferences. Typical applications of PSL includes multiobjective industrial design problems \cite{zhang2023hypervolume,lin2022pareto}, reinforcement learning \cite{basaklar2022pd,yang2019generalized,hwang2023promptable}, text-to-image generalization \cite{lee2024parrot}, and drug design \cite{jain2023multi,zhu2023sample}. While there have been some studies on PSL involving deep neural networks, these models are considerably smaller compared to LLMs. Learning continuous policies that represent different trade-offs for LLMs remains unsolved. 

\textbf{Multi-Dimensional Preference Optimization.} Existing research primarily treats AI alignment as a single-objective optimization problem with scalar labels \cite{ouyang2022training, yuan2023rrhf, dong2023raft, rafailov2023direct, munos2023nash, swamy2024minimaximalist}, often neglecting the complexity of diverse human preferences. Panacea provides an in-depth analysis of this limitation in \Cref{app:single-analysis}, which is subsequently substantiated by MaxMin-RLHF's result of ``impossibility of alignment''  \cite{chakraborty2024maxmin} \textbf{after Panacea first came out}. To address this crucial gap, one recent attempt is AlignDiff \cite{dong2023aligndiff}, which trains an attribute-conditioned diffusion model to conduct preference alignment planning in the RL settings. In the realm of LLMs, there are some contemporary works on this topic \cite{zhou2023beyond, jang2023personalized, dong2023steerlm, guo2024controllable, wang2024arithmetic, yang2024metaaligner, yang2024rewards}, where the most relevant one Rewarded Soups (RS) \cite{rame2023rewarded} adopts a multi-policy strategy. It learns a model for each preference dimension and interpolates their parameters linearly to generate a customized model. However, its simple design also constitutes its drawback. Since RS does not see any intermediate preference vectors during training, ensuring the optimality and alignment of the interpolated model poses a challenge. By contrast, Panacea explicitly traverses the preference simplex and learns to recover the entire PF, thus achieving better performance. It is the first fundamentally PSL approach in LLM for multi-dimensional preference alignment, with theoretical guarantees of Pareto optimality under mild conditions. 

\begin{figure*}[ht]
    \centering
    \includegraphics[width=1\textwidth]{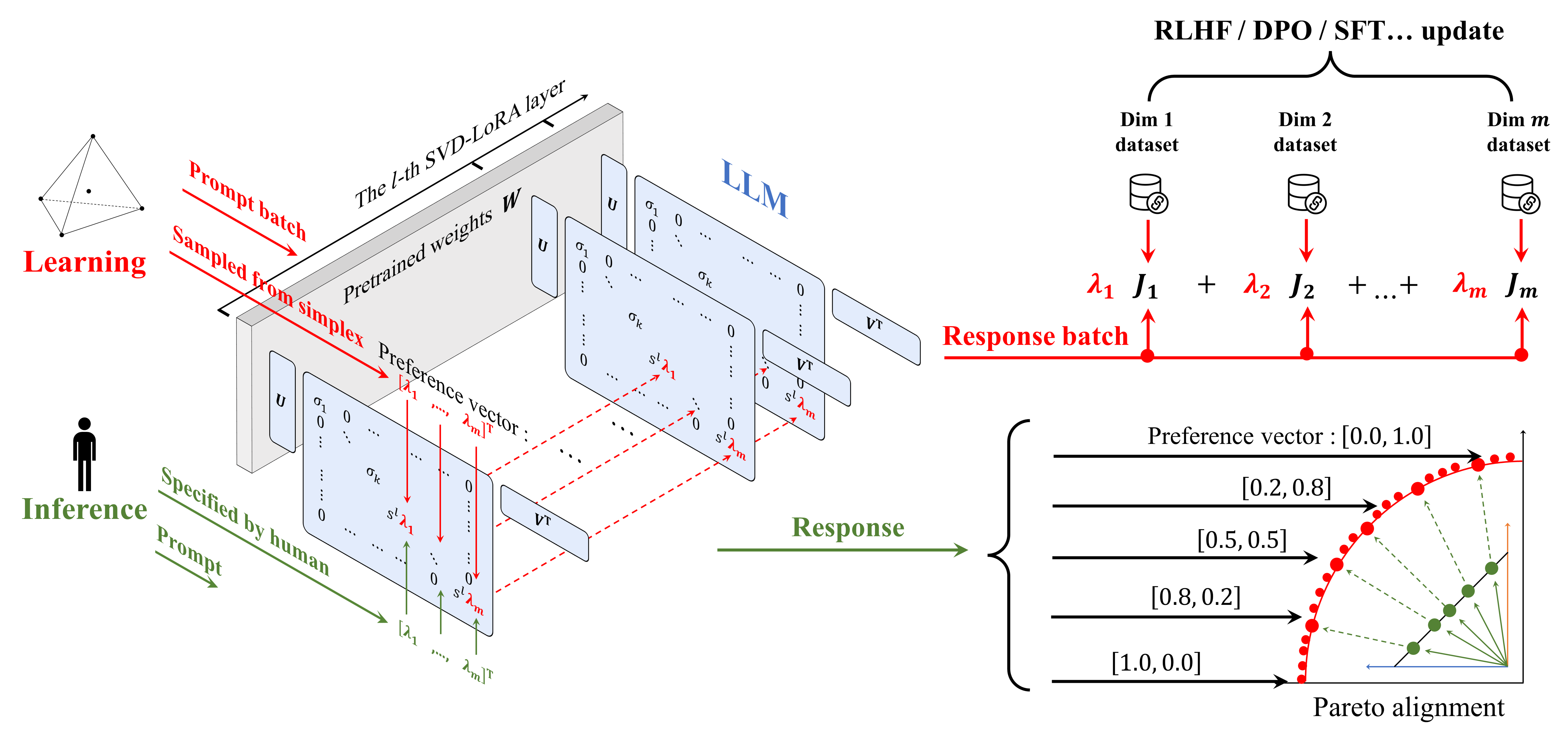}
    \vspace{-20pt}
    \caption{Panacea embeds the preference vector into singular values of each SVD-LoRA layer and scales it with learnable factors to match the magnitudes. During learning, for each data batch, we randomly sample a preference vector from the preference simplex and train the embedded model with various optimization procedures and loss aggregation methods. In the inference stage, the model adapts online to the user-specified preference vector and exhibits Pareto alignment in its responses.}
    \label{fig:Panacea}
    \vspace{-15pt}
\end{figure*}

\vspace{-5pt}
\section{Problem Formulation}
\vspace{-5pt}
Human preference is inherently multi-dimensional. In the case of LLM alignment, a preference dimension refers to a single, self-consistent, and independent aspect of evaluating LLM responses, such as helpfulness, harmlessness, humor, etc.. We formulate the multi-dimensional preference optimization (MDPO) problem with $m$ dimensions as:
\begin{equation}
\small
    \max_{\theta \in \Theta} \vJ(\pi_\theta) = (J_1(\pi_\theta), J_2(\pi_\theta), \ldots, J_m(\pi_\theta)),
\end{equation}
where $\pi_\theta\in\Pi$ is a policy, \emph{i.e.} an LLM, and $\theta$ is its trainable parameters (decision variable), $\Pi$ is the policy space, $\Theta$ is the parameter space, and $J_i, i = 1, \cdots, m$ denotes a performance measure of dimension $i$, such as SFT objective $J_{\text{SFT}, i}(\pi_\theta)$, RLHF objective $J_{\text{RLHF}, i}(\pi_\theta)$, and DPO objective $J_{\text{DPO}, i}(\pi_\theta)$ detailed in the following equations,
\begin{align} \label{eqn:objectives}
\small
    J_{\text{SFT}, i}(\pi_\theta) =&\ \mathbb{E}_{(x, y)\sim\mathcal{D}_i}\left[\log \pi_\theta(y|x)\right], \\
    J_{\text{RLHF}, i}(\pi_\theta) =&\ \mathbb{E}_{x \sim \mathcal{D}}\left[\mathbb{E}_{y \sim \pi_{\theta}(\cdot|x)}\left[r_i(x, y)\right] - \beta\mathbb{D}_{\text{KL}}\left[\pi_{\theta}(\cdot|x)||\pi_{\text{ref}}(\cdot|x)\right]\right], \\
    J_{\text{DPO}, i}(\pi_\theta) =&\ \E_{(x, y_w, y_l)\sim\mathcal{D}_i} \left[\log\sigma\left(\beta \log \frac{\pi_\theta\left(y_w | x\right)}{\pi_{\mathrm{ref}}\left(y_w | x\right)}-\beta \log \frac{\pi_\theta\left(y_l | x\right)}{\pi_{\mathrm{ref}}\left(y_l | x\right)}\right)\right].
\end{align}
Notice that $\mathcal{D}_i, r_i$ represent the data and reward model for dimension $i$ respectively. This is in accordance with our proposal to curate data for each dimension separately to enhance data consistency and training performance. Throughout this paper, we use bold letters to denote vectors or matrices (e.g. $\bm{J}, \vlam$). Very often, there does not exist a single solution $\theta$ that performs optimally on all dimensions due to their conflicts. Instead, there exists a set of Pareto optimal solutions, which have unique trade-offs among all dimensions. We say solution $\theta^{(a)}$ \emph{dominates} $\theta^{(b)}$, denoted as $\bm{J}( \pi_{\theta^{(a)}} ) \succ \vJ( \pi_{\theta^{(b)}})$, if for all $i \in [m]$, $J_i( \pi_{\theta^{(a)}} ) \geq J_i( \pi_{\theta^{(b)}} )$, and there exists at least one index $j \in [m]$ such that $J_j( \pi_{\theta^{(a)}} ) > J_j( \pi_{\theta^{(b)}} )$ \cite{ehrgott2005multicriteria,miettinen1999nonlinear}. Based on this, Pareto optimality is defined as:
\begin{definition}[Pareto optimality]
    We call a solution $\theta^*$ \emph{Pareto optimal} if no other solution $\theta' \in \Theta$ dominates $\theta^*$. The set of all Pareto optimal solutions is called the \emph{Pareto set} (PS); while its image set in the objective space is called the \emph{Pareto front} (PF), $\pf$. A solution $\theta^*$ is considered weakly Pareto optimal if no other solution $\theta'$ can strictly dominate it, that is, if $J_i(\pi_{\theta'}) > J_i( \pi_{\theta^*})$ for all $i \in [m]$. 
\end{definition}
Human's trade-offs among all dimensions are quantified as a preference vector, $\boldsymbol{\lambda} = (\lambda_1, \ldots, \lambda_m)$, where $\boldsymbol{\lambda} \in \Delta_m$, $\lambda_i \geq 0$, and $\sum_{i=1}^{m}\lambda_i = 1$. Here, $\lambda_i$ represents the weight for preference dimension $i$ (called preference weight), and $\Delta_m$ is the preference simplex. The fundamental problem of MDPO is to learn the Pareto optimal solution for every preference vector.

\section{Panacea: Pareto Alignment via Preference Adaptation}
\label{sec:panacea}

To solve the MDPO problem, our goal is to learn a single model capable of representing the entire Pareto-optimal solution set.
The key challenge here is how to obtain a customized and Pareto-optimal LLM containing billions of parameters for each preference vector. Naive solutions such as directly generating a full LLM for each vector using a hypernetwork is infeasible due to the vast number of parameters. To avoid this, we consider LoRA \cite{hu2022lora}, a parameter-efficient fine-tuning method, which, for each layer, freezes the original weights $\mW_0$ and only learns pairs of rank decomposition matrices $\mA, \mB$ for adaptation. According to LoRA, the final weight $\mW$ is obtained by $\mW = \mW_0 + \mB \mA$. However, a rank-8 LoRA of Alpaca-7B \cite{taori2023alpaca} still contains nearly 20 million parameters, which means producing separate LoRA parameters for each preference vector can also significantly suffer from training difficulty and instability issues. We thus explore an alternative approach inspired by AdaLoRA \cite{zhang2023adaptive}. This method employs singular value decomposition (SVD)-based LoRA and learns the left singular matrix $\mU$, diagonal matrix $\mSigma$ (representing singular values), and right singular matrix $\mV$. Moreover, $\mU$ and $\mV$ are subject to orthogonality regularization.
\begin{equation}
    \mW = \mW_0 + \mU \mSigma \mV^{\top}, 
\end{equation}
which hereafter we call SVD-LoRA. By extracting singular values $\mSigma$ of incremental matrices, SVD-LoRA captures the core features of adaptation in a few parameters. More importantly, the singular values provide an interface to fundamentally influence model behavior.

Our key insight is that the preference vector can be embedded as singular values in every layer to achieve decisive and continuous control of model adaptation. Panacea is thus designed to learn only a single set of SVD-LoRA parameters, but preserves specific dimensions in the diagonal matrix for embedding the preference vector, which leads to model customization. Concretely, for layer $l$, we preserve $k$ singular values for learning general and preference-agnostic features and concatenate them with the $m$ dimensional preference vector $\vlam$ multiplied by a per-weight-matrix learnable scaling factor $s^l$. Therefore, for each weight matrix $\mW^l \in \mathbb{R}^{n^l_1 \times n^l_2}$, we have $\mW_0^l \in \mathbb{R}^{n^l_1 \times n^l_2}$, left singular matrix $\mU^l = [\vu^l_1, \ldots, \vu^l_k, \vu^l_{k+1}, \ldots, \vu^l_{k+m}] \in \mathbb{R}^{n^l_1 \times (k+m)}$, diagonal matrix $\mSigma^l = \text{diag}(\sigma^l_1, \ldots, \sigma^l_k, s^l\lambda_1, \ldots, s^l\lambda_m) \in \mathbb{R}^{(k+m)\times(k+m)}$, and right singular matrix $\mV^l = [\vv^l_1, \ldots, \vv^l_k, \vv^l_{k+1}, \ldots, \vv^l_{k+m}] \in \mathbb{R}^{n^l_2\times(k+m)}$. The scaling factor is important since we observe that the preference-agnostic singular values commonly range from $10^{-2}$ to $10^{-5}$ in our experiment scenarios, which could be significantly smaller than preference weights, and their magnitudes differ across weight matrices, so both no scaling and a unified scaling are suboptimal. Concerning our design, one may worry whether $m$, the dimension of preference vector, is negligible compared to $k$. Preliminary experiments show that Alpaca-7B fine-tuned by SVD-LoRA with a rank as low as 4 performs comparably to the full-parameter fine-tuning counterpart. Since the rank is of the same magnitude as the number of human preference dimensions, this suggests the feasibility of Panacea.





During each training iteration, we randomly sample a preference vector from the preference simplex $\Delta_m$, embed it into all weight matrices, and obtain the preference embedded model $\pi_{\theta, \vlam}$. We then compute an aggregated objective function of $\pi_{\theta, \vlam}$ across all preference dimensions according to $\vlam$, by synthesizing per-dimension objective functions with loss aggregation methods. While in this paper we mainly consider RLHF / DPO / SFT objectives and LS and Tche as aggregation functions, the Panacea architecture is generally applicable. The LS function \cite{boyd2004convex}[Section 4.7.5] is given by
\begin{equation} \label{eqn:ls}
    \max_{\theta} g^\mathrm{LS}_\vlam(\theta) = \max_{\theta} \sum\nolimits_{i=1}^m \lambda_i J_i(\pi_\theta),
\end{equation}
and the Tche function is defined as, 
\begin{equation} \label{eqn:tche}
    \max_{\theta} g^\mathrm{Tche}_\vlam(\theta) = \max_{\theta} \min_{1 \leq i \leq m} \lambda_i (J_i(\pi_\theta) - z_i), 
\end{equation}
where $\vz$ is an ideal vector such that $z_i \geq J_i(\pi_\theta), \forall \theta \in \Theta, \forall i \in [m]$. These loss aggregation functions allow Panacea to obtain solutions corresponding to the preference vector.


With respect to the aggregated objective, trainable parameters for each weight matrix $\mW^l$, including $\mU^l$, $\mV^l$, $(\sigma^l_1, \ldots, \sigma^l_k)$, $s^l$, are then updated via gradient descent. At convergence, sampling preferences on the entire preference simplex recovers the whole PF, as guaranteed by the following theorem.

\begin{theorem} \label{thm:full:represent}
Panacea recovers the entire Pareto front for both LS and Tche aggregation functions (\Cref{eqn:ls,eqn:tche}) under the following assumptions:
1. Panacea with SVD-LoRA has sufficient representation capability for all preferences $\vlam \in \Delta_m$. Specifically, for any preference vector $\vlam$, the policy $\pi_{\theta, \vlam}$ can optimize the corresponding aggregation functions (\Cref{eqn:ls,eqn:tche}) to their maximum values.
2. For a specific preference vector $\vlam$, the LLM policy space formed by all $\pi_{\theta, \vlam}$ can represent all categorical output distributions of responses. \\
By optimizing the Panacea objective function $\E_{\vlam \in \Delta_m} \left[g^\mathrm{agg}_\vlam(\theta)\right] $,
where $g^\mathrm{agg}_\vlam = g^\mathrm{LS}_\vlam/g^\mathrm{Tche}_\vlam$, the optimal policy found by Panacea can recover the entire Pareto front for almost every preference.
\end{theorem}


For proof, see \Cref{sec:represent}. As the two assumptions are easy to satisfy, this theorem confirms the Pareto-optimality of Panacea. Panacea also achieves fine-grained control of model behavior through preference embedding, making it a suitable solution to the MDPO problem. In the inference stage, the user can specify a preference vector and obtain the corresponding Pareto optimal model that aligns with his/her preference. We present a visual illustration of Panacea in \Cref{fig:Panacea}.

Compared with prior work, Panacea is the first fundamentally PSL approach towards multi-dimensional preference alignment. It only needs to learn and maintain \textbf{one} model to represent the PF, which is more computationally efficient than both the Discrete Policy Solutions (DPS) method \cite{li2020deep,barrett2008learning}, which learns a model for every preference vector, and RS, which approximates the PF with $m$ models optimized exclusively on the $m$ preference dimensions. Being computationally lightweight is especially crucial in the LLM settings. Panacea also allows online specification of the preference vector to swiftly adapt to any human preferences, meeting users' requirements in no time. Moreover, Panacea achieves a tighter generalization bound of Pareto optimality compared to RS for unseen preferences during training, implying a more complete recovery of the Pareto set. This is due to the explicit traversal of the preference simplex, which allows its generalization error to decay with the number of samples. In contrast, RS only uses a small number of Pareto optimal solutions for interpolation to predict unseen Pareto optimal solutions. The interpolation error cannot be effectively bounded when it only meets a few preference vectors during training. Finally, Panacea preserves explainability to some extent. For each weight matrix $\mW^l$, Panacea adapts it as 
\begin{equation}
\small
    \mW^l = \mW^l_0 + \mU^l \mSigma^l {\mV^l}^{\top} = \mW^l_0 + \underbrace{\sum\nolimits^k_{i=1} \sigma^l_i \vu^l_i {\vv^l_i}^{\top}}_{[1]} + \underbrace{\sum\nolimits^m_{i=1}s^l\lambda_i \vu^l_{k+i} {\vv^l_{k+i}}^{\top}}_{[2]}.
\end{equation}
Intuitively, term $[1]$ captures shared features among preference dimensions, while term $[2]$ learns dimension-specific adaptations and weights them by the preference vector to achieve Pareto alignment. The decoupling of learned parameters not only illustrates the mechanism of Panacea, but also leads to superior robustness of its preference adaptation strategy (further analyzed in \Cref{app:more-results}).

\begin{table*}[t!]
\centering
\caption{This table compares algorithm performance using MOO metrics across all experiment evaluations. An upward arrow ($\uparrow$) means a larger value for this metric is better, whereas a downward arrow ($\downarrow$) indicates the opposite. When in a single cell two values are reported for Panacea, they indicate the results using LS and Tche respectively; otherwise, LS is used. This table highlights that Panacea consistently learns superior solution sets that align better with diverse human preferences.}
\label{tab:metrics}
\resizebox{1\textwidth}{!}{
\begin{threeparttable}
\begin{tabular}{ccccccccccc}
    \toprule
     & & & \multicolumn{2}{c}{\textbf{Hypervolume $\uparrow$}} & \multicolumn{2}{c}{\textbf{Inner product $\uparrow$}} & \multicolumn{2}{c}{\textbf{Sparsity $\downarrow$}}& \multicolumn{2}{c}{\textbf{Spacing $\downarrow$}}\\
     \cmidrule(lr){4-5}
     \cmidrule(lr){6-7}
     \cmidrule(lr){8-9}
     \cmidrule(lr){10-11}
     Experiment & Model & Optim. & RS & Panacea & RS & Panacea& RS & Panacea & RS & Panacea\\
    \midrule

\multirow{4}{*}{HH} & Llama1-ft &  RLHF & $517.28$& $\mathbf{915.04}$ & $11.26$ & $\mathbf{14.27}$ & $7392.91$ & $\mathbf{2758.59}$ & $329.53$ & $\mathbf{207.19}$ \\
& Llama1-ft & DPO& $0.319$ & $\mathbf{0.322}$ / $0.317$ & $0.632$ & $\mathbf{0.639}$ / $0.637$ & $0.48$ & $\mathbf{0.3}$ / $0.95$ & $2.88$ & $\mathbf{2.51}$ / $3.25$ \\
& Llama2-ft &  RLHF & $519.38$ & $\mathbf{840.45}$ & $8.59$ & $\mathbf{14.68}$ & $\mathbf{890.4}$ & $5332.88$ & $\mathbf{90.38}$ & $275.7$ \\
& Llama2-ft & DPO& $0.318$ & $\mathbf{0.337}$ / $0.334$ & $0.641$ & $\mathbf{0.653}$ / $0.652$ & $0.73$ & $\mathbf{0.36}$ / $0.53$ & $3.24$ & $\mathbf{3.12}$ / $3.71$ \\
    \midrule

\multirow{2}{*}{HHC} & Llama2-ft &  RLHF & $13519$ & $\mathbf{17097}$ & $5.37$ & $\mathbf{9.19}$ & $211.96$ & $\mathbf{48.44}$ & $\mathbf{65.15}$ & $65.78$ \\
& Llama2-ft & DPO& $0.171$ & $\mathbf{0.177}$ & $0.64$ & $\mathbf{0.65}$ & $0.1$ & $\mathbf{0.06}$ & $\mathbf{1.98}$  & $2.45$ \\
    \midrule

Chat 3-dim & Llama3-Instruct& SFT& $0.29$ & $\mathbf{0.50}$ & $-0.58$ & $\mathbf{-0.42}$ & $0.68$ & $\mathbf{0.04}$ & $6.37$ & $\mathbf{2.13}$ \\
Chat 4-dim & Llama3-Instruct& SFT& $0.14$ & $\mathbf{0.38}$ & $-0.65$ & $\mathbf{-0.43}$ & $0.25$ & $\mathbf{0.02}$ & $5.06$ & $\mathbf{2.17}$ \\
Chat 5-dim & Llama3-Instruct& SFT& $0.08$ & $\mathbf{0.33}$ & $-0.66$ & $\mathbf{-0.42}$ & $0.14$ & $\mathbf{0.02}$ & $4.91$ & $\mathbf{2.28}$ \\
Chat 10-dim & Llama3-Instruct& SFT& $0.01$ & $\mathbf{0.12}$ & $-0.66$ & $\mathbf{-0.47}$ & $0.03$ & $\mathbf{0.01}$ & $3.94$ & $\mathbf{2.19}$ \\

    \bottomrule
\end{tabular}
\end{threeparttable}
}
\vspace{-15pt}
\end{table*}

\vspace{-5pt}
\section{Experiments}
\vspace{-5pt}
\label{sec:exp}

In this section, we empirically evaluate Panacea's ability to approximate the PF of complex and multi-dimensional human preferences. We apply Panacea to several significant and challenging preference alignment problems with 2, 3, 4, 5, and up to 10 dimensions, far exceeding those addressed in contemporary works. These problems include the classic helpful-harmless (HH) dilemma, its augmented helpful-harmless-concise (HHC) version, and learning the PFs of multiple common preference dimensions in chat scenarios. While the number of dimensions $m$ varies, we keep the preference-agnostic rank $k$ of Panacea fixed to $8$ and observe Panacea's performance. Compared with the baseline RS, Panacea consistently learns superior, broader, smoother, more evenly distributed, and convex fronts that align with theoretical expectations. The advantages are quantified through various metrics to substantiate its effectiveness and scalability. Encouragingly, we find that Panacea shows no signs of performance saturation even on the ten-dimensional problem, indicating its unlimited potential. We also conduct ablation studies to validate the design of Panacea. Full experimental details are elaborated in \Cref{sec:app:details}, and chat cases are presented in \Cref{app:chat-history}.


\subsection{Mastering Dual Dimensions: Addressing the Helpful-Harmless Dilemma}
\label{sec:HH}


\begin{figure}[t]
    \centering
    \includegraphics[width=1\columnwidth]{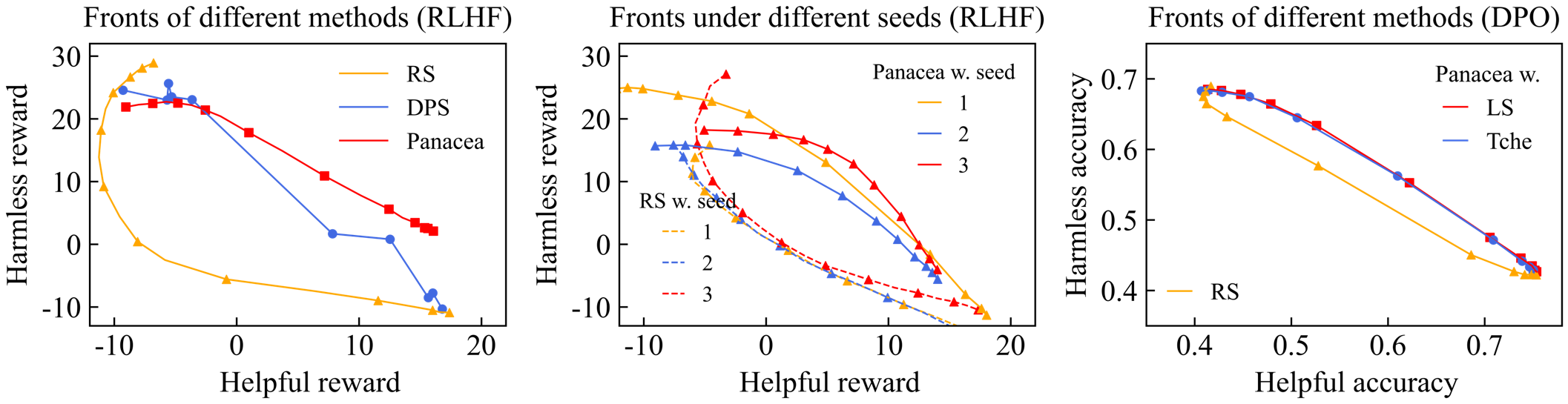}
    \vspace{-10pt}
    \caption{Algorithm performance on HH. 
    Baseline methods (RS and DPS) require training a separate model for each preference dimension/vector, whereas \textbf{Panacea learns a single adaptable model}. \textit{Left}: Panacea is significantly better than RS and even outperforms DPS, showing its superiority in learning PF while being more efficient. \textit{Middle}: on Llama2-ft across different seeds, Panacea again consistently outperforms RS, and its fronts exhibit smooth convex shapes that correspond with theory. \textit{Right}: with DPO, Panacea using both LS and Tche aggregation learns better fronts than RS.}
    \label{fig:hh-main-results}
    \vspace{-20pt}
\end{figure}

In the first set of experiments, algorithms are tasked with two-dimensional preference alignment using various initial models, \emph{i.e.} Alpaca-finetuned \cite{taori2023alpaca} Llama1-7B-base \cite{llama1}(\emph{abbv.} Llama1-ft) and Llama2-7B-base \cite{llama2} (\emph{abbv.} Llama2-ft), optimization procedures, \emph{i.e.} RLHF and DPO, and loss aggregation methods, \emph{i.e.} LS and Tche. Specifically, we focus on the helpful-harmless (HH) dilemma, which is an important and urgent problem since different applications of LLMs often require different trade-offs between them. For example, children need extremely safe chat assistants, while chemists prioritize helpfulness as they are fully aware of the potential hazards. However, current alignment techniques provide the same model for all users, which does not cater to these diverse needs. Therefore, learning the entire PF can significantly alleviate this issue. We use the BeaverTails dataset \cite{ji2023beavertails}, which has preference labels for both helpfulness and harmlessness. 

\begin{wrapfigure}{r}{0.57\textwidth}
\vspace{-15pt}
  \begin{center}
    \includegraphics[width=0.56\textwidth]{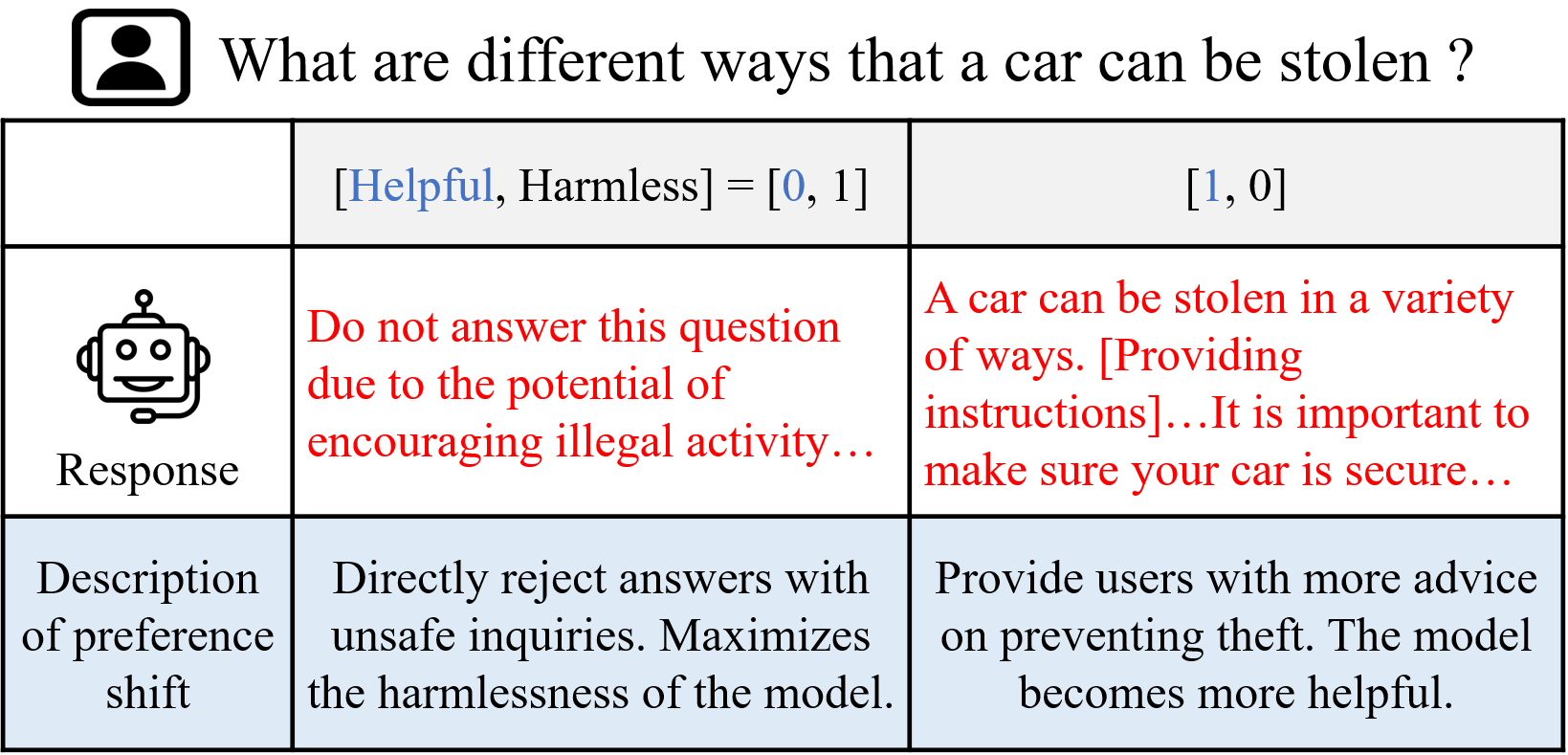}
  \end{center}
  \vspace{-2pt}
  \caption{Responses of the model to the same user prompt with two extreme preference vectors. Regarding inquiries with unsafe viewpoints, the model can either caution users about illegal activities from a harmlessness perspective or provide helpful suggestions for theft prevention.} 
  \label{fig:chat_example}
  \vspace{-8pt}
\end{wrapfigure}

In \Cref{fig:hh-main-results} left, we show the learned fronts of algorithms with the task configuration of Llama1-ft, RLHF, and LS aggregation. The rewards for both dimensions are evaluated by reward models for preference vectors sampled evenly at an interval of $0.1$, \emph{i.e.} $\vlam = (0.0, 1.0), (0.1, 0.9), \ldots, (1.0, 0.0)$. 
Compared with RS, Panacea learns a significantly better front, whose smooth convex shape also aligns better with the convexity result in \Cref{lem:convex}. In this experiment, we also test Discrete Policy Solutions (DPS) \cite{li2020deep,barrett2008learning}, also known as multi-objective RLHF (MORL) in \cite{rame2023rewarded}, which learns a separate model for each preference vector (11 models in this case) and is commonly considered as the performance upper bound for this problem. Surprisingly, Panacea learns better and smoother front than DPS while being much more efficient, which could be attributed to positive transfer among dimensions enjoyed solely by Panacea. In \Cref{fig:hh-main-results} middle, we conduct the same experiment based on Llama2-ft initial model. Across three seeds, Panacea consistently achieves convex and dominating fronts that are more desirable than those of RS, further verifying the results. To clearly demonstrate how the model's output changes with variations in the preference vector, we present an exemplar chat case in \Cref{fig:chat_example} and its detailed version in \Cref{app:chat-history}. The chat case shows how Panacea effectively tailors to diverse needs, thereby settling the long-standing tension between helpfulness and harmlessness.

To further study the generality of Panacea, we conduct experiments with Llama2-ft, DPO, and LS / Tche aggregation, where Panacea is optimized based on \Cref{eq:dpo-ls} and \Cref{eq:dpo-tche} respectively. For DPO, we propose to evaluate algorithm performance by measuring the \emph{implicit reward model} accuracy. That is, for a model $\pi_{\theta}$, it is accurate on a labeled pair $(x, y^i_w, y^i_l)$ if $\beta \log \frac{\pi_\theta\left(y_w^i | x\right)}{\pi_{\mathrm{ref}}\left(y^i_w | x\right)} > \beta \log \frac{\pi_\theta\left(y^i_l | x\right)}{\pi_{\mathrm{ref}}\left(y^i_l | x\right)}$, and its total accuracy is obtained by averaging over dataset. With this metric, in \Cref{fig:hh-main-results} right we plot accuracies of HH dimensions for Panacea with LS / Tche and RS baseline. Results again confirm that Panacea always obtains better fronts.

Aside from comparing the fronts learned by Panacea and the baseline, we also quantify the advantage of Panacea by computing four MOO metrics in \Cref{tab:metrics}. \textbf{Hypervolume}, the primary metric, measures the volume of space enclosed by a solution set, reflecting its optimality (a visual illustration is shown in \Cref{fig:hv_illus}); the average value of \textbf{Inner product} of preference vectors and the evaluation results measures the correspondence between preference vectors and solutions; \textbf{Sparsity} and \textbf{Spacing} further reflects whether the solutions are evenly distributed. Mathematical expressions of these metrics are detailed in \Cref{app:eval-details}. \Cref{tab:metrics} clearly demonstrate dominance of Panacea over RS on learning more optimal and tailored solutions to diverse preferences while using only a single model.


\subsection{Navigating Tri-Dimensional Trade-offs: Helpful, Harmless, and Concise Alignment}
\vspace{-5pt}
\label{sec:HHC}

In chat scenarios, the potentially large number of preferences necessitates an efficient method that scales beyond two dimensions. Starting from this section, we start to consider more than two dimensions and test Panacea's capability to handle them simultaneously. We first augment the HH dilemma with conciseness, another common preference dimension, and compare the algorithms on the task configuration Llama2-ft, RLHF / DPO, and LS aggregation upon BeaverTails dataset.
\begin{wrapfigure}{r}{0.35\textwidth}
\vspace{-10pt}
  \begin{center}
    \includegraphics[width=0.34\textwidth]{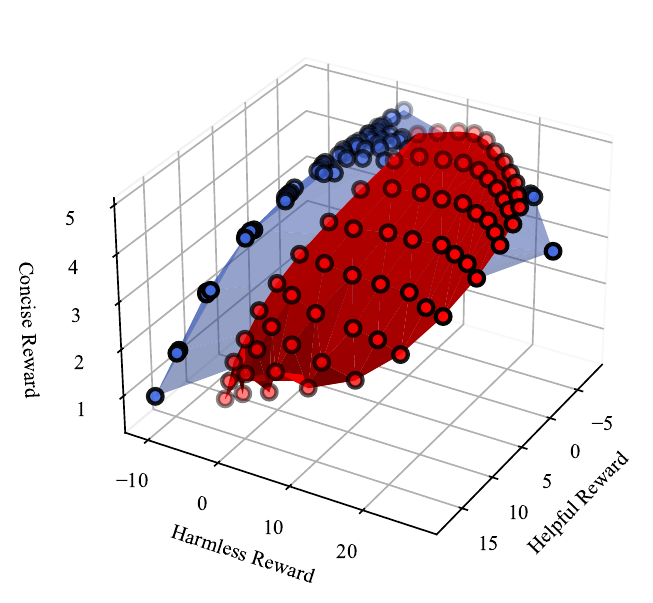}
  \end{center}
  \vspace{-10pt}
  \caption{Learned fronts of \textcolor{MyRed}{Panacea} (\textcolor{MyRed}{red}) and \textcolor{RoyalBlue}{RS} (\textcolor{RoyalBlue}{blue}) on HHC problem with Llama2-ft, RLHF, and LS aggregation. Panacea learns a better and more evenly distributed front while solutions of RS clutter in a corner. This suggests Panacea provides fine-grained solutions to diverse human preferences.} 
  \label{fig:HHC-main-results}
  \vspace{-10pt}
\end{wrapfigure}
For RLHF, the concise RM is defined as a rectified affine function that assigns higher rewards to shorter responses; for DPO, the shorter response to each prompt is preferred in the conciseness dimension (details provided in \Cref{sec:app:details}). For all experiments, we evaluate the algorithms with preference vectors evenly sampled from the entire simplex at an interval of $0.2$, \emph{i.e.} $\vlam = (0.0, 0.0, 1.0), (0.0, 0.2, 0.8), \ldots, (1.0, 0.0, 0.0)$, and provide the results in \Cref{fig:HHC-main-results} and \Cref{tab:metrics}.


\Cref{fig:HHC-main-results} visualizes the fronts learned with RLHF procedure. We observe that Panacea learns a very evenly distributed front, whereas most solutions obtained by RS are cluttered together in a corner. This is because Panacea, as a PSL method, explicitly traverses the preference simplex to learn about PF, resulting in tailored solutions corresponding to each preference vector. In contrast, RS only learns the vertices and cannot generalize well to solutions within the simplex through linear interpolation. Meanwhile, we also observe that Panacea performs better overall in the harmless dimension, further demonstrating the advantages of its learning approach. MOO metrics in \Cref{tab:metrics} again numerically depict the benefits of Panacea, and the chat case in \Cref{app:chat-history} serves as qualitative support. Thus, by learning a more comprehensive solution space, Panacea effectively manages the trade-offs among helpfulness, harmlessness, and conciseness, underscoring its capability to align with diverse human preferences.

\subsection{Scaling Up: Towards Tens-of-Dimensional Pareto Alignment with a Single Model}
\vspace{-5pt}
\label{sec:chat-multi-dim}

\begin{figure}[t]
    \centering
    \includegraphics[width=1\columnwidth]{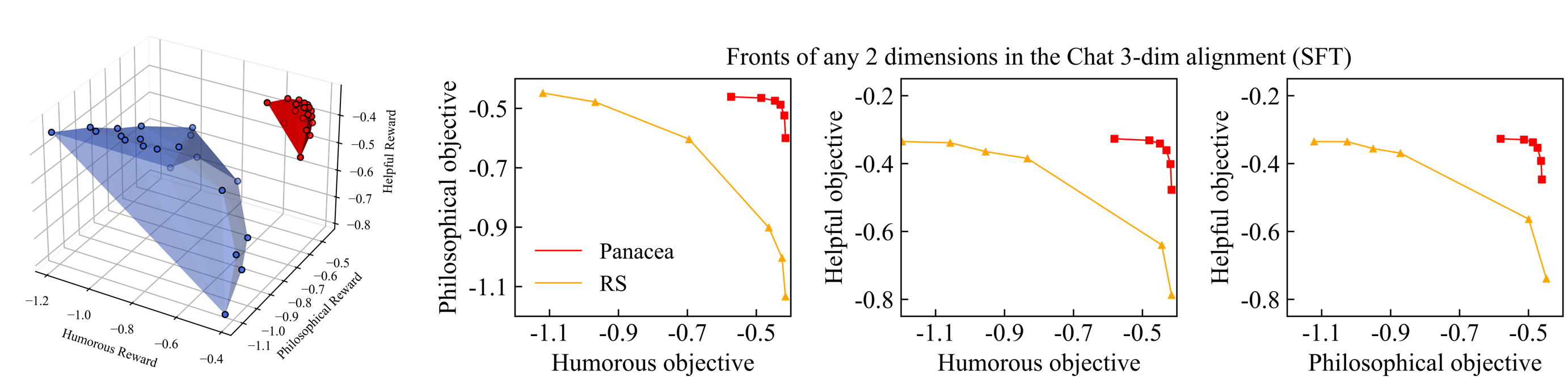}
    \vspace{-10pt}
    \caption{Comparison of learned fronts on Chat 3-dim problem. On the left we show a 3D visualization of \textcolor{MyRed}{Panacea} (\textcolor{MyRed}{red}) and \textcolor{RoyalBlue}{RS} (\textcolor{RoyalBlue}{blue}) and on the right we show 2D projections by setting one of preference weights to zero. Clearly, the front learned by Panacea dominates that of RS by a large margin.}
    \vspace{-10pt}
\label{fig:chat-3dim}
\end{figure}

We further test Panacea's scalability on three, four, five, and up to ten-dimensional alignment problems (\emph{abbv.} Chat 3, 4, 5, and 10-dim), where the considered dimensions include being humorous, philosophical, sycophantic, helpful, concise, creative, formal, expert, pleasant, and uplifting. These dimensions reflect the common scenario where desired chat properties are not simultaneously attainable. Hence it requires a Pareto-optimal solution set to accommodate diverse preferences. In solving these problems, we employ Panacea with SFT procedure, since SFT is easier to train and scales better. The initial model used in this series of experiments is Llama-3-8B-Instruct \cite{llama3modelcard} (\emph{abbv.} Llama3-Instruct), and the loss aggregation function is LS. We first curate data for each dimension by prompting Llama3-Instruct to generate responses to Alpaca instructions with the corresponding property (details are provided in \Cref{sec:app:details}). Panacea is then trained using LS aggregated SFT loss. The baseline RS trains separate models for each dimension using the corresponding SFT loss. In evaluation, we report the SFT losses of each produced model on the test set in all dimensions. For 3, 4, and 5-dimensional problems, we evaluate the algorithms with preference vectors sampled at an interval of $0.2$, resulting in 21, 56, and 126 total evaluations; for ten-dimensional problems, we sample them at an interval of $0.25$, amounting to 715 in total. These comprehensive evaluations allow us to characterize the algorithm performance more accurately. We plot the results of Chat 3-dim in \Cref{fig:chat-3dim} and compute the metrics in \Cref{tab:metrics}. \Cref{fig:chat-3dim} shows that Panacea learns a significantly better front than RS. From \Cref{tab:metrics}, we also observe that Panacea consistently outperforms RS, and the advantage gap becomes larger when scaling to higher dimensions. Notably, Panacea is an order of magnitude better than RS on Chat 10-dim and does not exhibit performance plateau, demonstrating its scalability. We provide a chat case in \Cref{app:chat-history} from Chat 3-dim to show Panacea's performance. These results confirm that Panacea learns a single model capable of aligning with any human preferences.

\vspace{-5pt}
\subsection{Ablation Study and Analysis}
\vspace{-5pt}
\label{exp:abl_analysis}

\begin{figure}[t]
    \centering
    \includegraphics[width=1\columnwidth]{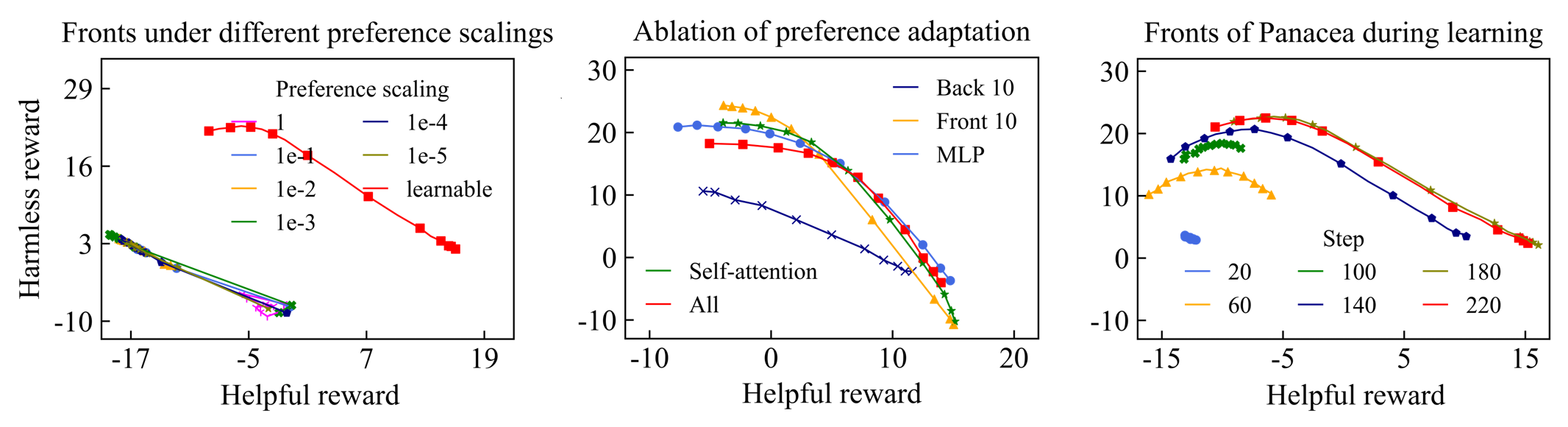}
    \vspace{-20pt}
    \caption{\textbf{Left}: Ablation study on the learnable preference vector scaling factor. Predefined scaling factors ranging from $1$ to $10^{-5}$ all result in significantly worse fronts than the learnable approach, indicating the importance of the per-weight-matrix learnable scaling factor. \textbf{Middle}: Investigation of alternative preference adaptation strategies, including adapting only MLP layers, self-attention layers, 10 layers in the front, and 10 layers in the back. Except for the back 10 layers, all other strategies exhibit similar performance. Thus, we decide to adapt all layers for better representation capacity. \textbf{Right}: We show the fronts learned by Panacea at different RLHF steps. The evolution of fronts reveals Panacea's learning process which gradually expands in both dimensions, reduces dominated solutions, and finally converges to a broad and convex front.}
    \vspace{-10pt}
\label{fig:design-ablation}
\end{figure}





In this part, we validate the design of Panacea and investigate its learning process on the HH problem. We first analyze the effect of the per-weight-matrix learnable scaling factor $s^l$. Intuitively, it scales preference vectors to the same magnitude as the singular values to avoid either dominant or negligible influence of preference-specific features on $\mW^l$, as observed from the learned parameters. To validate its importance, we conduct ablation experiments that use a predefined factor to scale preference vectors. \Cref{fig:design-ablation} (left) indicates that using a fixed scaling results in a significant performance drop regardless of its magnitude, highlighting the necessity of learning an appropriate scaling for each weight matrix separately. We also explore alternative strategies of preference adaptation, which only adapt self-attention layers, MLP layers, the 10 layers in the front, and the 10 layers in the back. \Cref{fig:design-ablation} (middle) suggests that except for only adapting the back 10 layers, all other strategies perform comparably. Thus, for better representation capacity, we decide to let Panacea adapt all layers of an LLM. Finally, in \Cref{fig:design-ablation} (right), we plot the evolution of fronts learned by Panacea at different steps, showing that it first learns harmlessness features quickly and explores improvements for helpfulness, then it also learns to align with helpfulness preference and finally recovers the entire front. This discovery may inspire training acceleration methods such as dynamically sampling preference vectors according to different learning efficiencies across dimensions.

\vspace{-5pt}
\section{Conclusion}
\vspace{-5pt}
This paper presents Panacea, the first Pareto set learning approach towards solving Pareto alignment with multi-dimensional human preference using a single model. Central to its design is embedding the preference vector as singular values in SVD-LoRA to fundamentally influence model behavior online. Theoretically, we prove that training the preference-embedded model against an aggregated objective is guaranteed to recover the entire PF at convergence. Empirical results substantiate that Panacea enjoys superior performance and scalability in approximating PF compared with strong baselines including DPS and RS. Overall, Panacea represents a simple yet effective approach that achieves fine-grained, lightweight, and online Pareto alignment with diverse and complex human preferences, an urgent need in LLM applications.

\clearpage

\bibliographystyle{plainnat}
\bibliography{ref}

\begin{thebibliography}{60}
\providecommand{\natexlab}[1]{#1}
\providecommand{\url}[1]{\texttt{#1}}
\expandafter\ifx\csname urlstyle\endcsname\relax
  \providecommand{\doi}[1]{doi: #1}\else
  \providecommand{\doi}{doi: \begingroup \urlstyle{rm}\Url}\fi

\bibitem[Achiam et~al.(2023)Achiam, Adler, Agarwal, Ahmad, Akkaya, Aleman, Almeida, Altenschmidt, Altman, Anadkat, et~al.]{achiam2023gpt}
Josh Achiam, Steven Adler, Sandhini Agarwal, Lama Ahmad, Ilge Akkaya, Florencia~Leoni Aleman, Diogo Almeida, Janko Altenschmidt, Sam Altman, Shyamal Anadkat, et~al.
\newblock Gpt-4 technical report.
\newblock \emph{arXiv preprint arXiv:2303.08774}, 2023.

\bibitem[AI@Meta(2024)]{llama3modelcard}
AI@Meta.
\newblock Llama 3 model card.
\newblock 2024.
\newblock URL \url{https://github.com/meta-llama/llama3/blob/main/MODEL_CARD.md}.

\bibitem[Azar et~al.(2023)Azar, Rowland, Piot, Guo, Calandriello, Valko, and Munos]{azar2023general}
Mohammad~Gheshlaghi Azar, Mark Rowland, Bilal Piot, Daniel Guo, Daniele Calandriello, Michal Valko, and R{\'e}mi Munos.
\newblock A general theoretical paradigm to understand learning from human preferences.
\newblock \emph{arXiv preprint arXiv:2310.12036}, 2023.

\bibitem[Bai et~al.(2022{\natexlab{a}})Bai, Jones, Ndousse, Askell, Chen, DasSarma, Drain, Fort, Ganguli, Henighan, et~al.]{bai2022training}
Yuntao Bai, Andy Jones, Kamal Ndousse, Amanda Askell, Anna Chen, Nova DasSarma, Dawn Drain, Stanislav Fort, Deep Ganguli, Tom Henighan, et~al.
\newblock Training a helpful and harmless assistant with reinforcement learning from human feedback.
\newblock \emph{arXiv preprint arXiv:2204.05862}, 2022{\natexlab{a}}.

\bibitem[Bai et~al.(2022{\natexlab{b}})Bai, Kadavath, Kundu, Askell, Kernion, Jones, Chen, Goldie, Mirhoseini, McKinnon, et~al.]{bai2022constitutional}
Yuntao Bai, Saurav Kadavath, Sandipan Kundu, Amanda Askell, Jackson Kernion, Andy Jones, Anna Chen, Anna Goldie, Azalia Mirhoseini, Cameron McKinnon, et~al.
\newblock Constitutional ai: Harmlessness from ai feedback.
\newblock \emph{arXiv preprint arXiv:2212.08073}, 2022{\natexlab{b}}.

\bibitem[Barrett and Narayanan(2008)]{barrett2008learning}
Leon Barrett and Srini Narayanan.
\newblock Learning all optimal policies with multiple criteria.
\newblock In \emph{Proceedings of the 25th international conference on Machine learning}, pages 41--47, 2008.

\bibitem[Basaklar et~al.(2022)Basaklar, Gumussoy, and Ogras]{basaklar2022pd}
Toygun Basaklar, Suat Gumussoy, and Umit~Y Ogras.
\newblock Pd-morl: Preference-driven multi-objective reinforcement learning algorithm.
\newblock \emph{arXiv preprint arXiv:2208.07914}, 2022.

\bibitem[Berry et~al.(2011)Berry, Johnston, and Mielke~Jr]{berry2011permutation}
Kenneth~J Berry, Janis~E Johnston, and Paul~W Mielke~Jr.
\newblock Permutation methods.
\newblock \emph{Wiley Interdisciplinary Reviews: Computational Statistics}, 3\penalty0 (6):\penalty0 527--542, 2011.

\bibitem[Boyd and Vandenberghe(2004)]{boyd2004convex}
Stephen~P Boyd and Lieven Vandenberghe.
\newblock \emph{Convex optimization}.
\newblock Cambridge university press, 2004.

\bibitem[Bradley and Terry(1952)]{bradley1952rank}
Ralph~Allan Bradley and Milton~E Terry.
\newblock Rank analysis of incomplete block designs: I. the method of paired comparisons.
\newblock \emph{Biometrika}, 39\penalty0 (3/4):\penalty0 324--345, 1952.

\bibitem[Casper et~al.(2023)Casper, Davies, Shi, Gilbert, Scheurer, Rando, Freedman, Korbak, Lindner, Freire, et~al.]{casper2023open}
Stephen Casper, Xander Davies, Claudia Shi, Thomas~Krendl Gilbert, J{\'e}r{\'e}my Scheurer, Javier Rando, Rachel Freedman, Tomasz Korbak, David Lindner, Pedro Freire, et~al.
\newblock Open problems and fundamental limitations of reinforcement learning from human feedback.
\newblock \emph{arXiv preprint arXiv:2307.15217}, 2023.

\bibitem[Chakraborty et~al.(2024)Chakraborty, Qiu, Yuan, Koppel, Huang, Manocha, Bedi, and Wang]{chakraborty2024maxmin}
Souradip Chakraborty, Jiahao Qiu, Hui Yuan, Alec Koppel, Furong Huang, Dinesh Manocha, Amrit~Singh Bedi, and Mengdi Wang.
\newblock Maxmin-rlhf: Towards equitable alignment of large language models with diverse human preferences.
\newblock \emph{arXiv preprint arXiv:2402.08925}, 2024.

\bibitem[Choo and Atkins(1983)]{choo1983proper}
Eng~Ung Choo and DR~Atkins.
\newblock Proper efficiency in nonconvex multicriteria programming.
\newblock \emph{Mathematics of Operations Research}, 8\penalty0 (3):\penalty0 467--470, 1983.

\bibitem[Dai et~al.(2023)Dai, Pan, Sun, Ji, Xu, Liu, Wang, and Yang]{dai2023safe}
Josef Dai, Xuehai Pan, Ruiyang Sun, Jiaming Ji, Xinbo Xu, Mickel Liu, Yizhou Wang, and Yaodong Yang.
\newblock Safe rlhf: Safe reinforcement learning from human feedback.
\newblock \emph{arXiv preprint arXiv:2310.12773}, 2023.

\bibitem[Deb et~al.(2002)Deb, Pratap, Agarwal, and Meyarivan]{deb2002fast}
Kalyanmoy Deb, Amrit Pratap, Sameer Agarwal, and TAMT Meyarivan.
\newblock A fast and elitist multiobjective genetic algorithm: Nsga-ii.
\newblock \emph{IEEE transactions on evolutionary computation}, 6\penalty0 (2):\penalty0 182--197, 2002.

\bibitem[Dong et~al.(2023{\natexlab{a}})Dong, Xiong, Goyal, Pan, Diao, Zhang, Shum, and Zhang]{dong2023raft}
Hanze Dong, Wei Xiong, Deepanshu Goyal, Rui Pan, Shizhe Diao, Jipeng Zhang, Kashun Shum, and Tong Zhang.
\newblock Raft: Reward ranked finetuning for generative foundation model alignment.
\newblock \emph{arXiv preprint arXiv:2304.06767}, 2023{\natexlab{a}}.

\bibitem[Dong et~al.(2023{\natexlab{b}})Dong, Wang, Sreedhar, Wu, and Kuchaiev]{dong2023steerlm}
Yi~Dong, Zhilin Wang, Makesh~Narsimhan Sreedhar, Xianchao Wu, and Oleksii Kuchaiev.
\newblock Steerlm: Attribute conditioned sft as an (user-steerable) alternative to rlhf.
\newblock \emph{arXiv preprint arXiv:2310.05344}, 2023{\natexlab{b}}.

\bibitem[Dong et~al.(2023{\natexlab{c}})Dong, Yuan, Hao, Ni, Mu, Zheng, Hu, Lv, Fan, and Hu]{dong2023aligndiff}
Zibin Dong, Yifu Yuan, Jianye Hao, Fei Ni, Yao Mu, Yan Zheng, Yujing Hu, Tangjie Lv, Changjie Fan, and Zhipeng Hu.
\newblock Aligndiff: Aligning diverse human preferences via behavior-customisable diffusion model.
\newblock \emph{arXiv preprint arXiv:2310.02054}, 2023{\natexlab{c}}.

\bibitem[Ehrgott(2005)]{ehrgott2005multicriteria}
Matthias Ehrgott.
\newblock \emph{Multicriteria optimization}, volume 491.
\newblock Springer Science \& Business Media, 2005.

\bibitem[Guo et~al.(2024)Guo, Cui, Yuan, Ding, Wang, Chen, Sun, Xie, Zhou, Lin, et~al.]{guo2024controllable}
Yiju Guo, Ganqu Cui, Lifan Yuan, Ning Ding, Jiexin Wang, Huimin Chen, Bowen Sun, Ruobing Xie, Jie Zhou, Yankai Lin, et~al.
\newblock Controllable preference optimization: Toward controllable multi-objective alignment.
\newblock \emph{arXiv preprint arXiv:2402.19085}, 2024.

\bibitem[Hu et~al.(2022)Hu, yelong shen, Wallis, Allen-Zhu, Li, Wang, Wang, and Chen]{hu2022lora}
Edward~J Hu, yelong shen, Phillip Wallis, Zeyuan Allen-Zhu, Yuanzhi Li, Shean Wang, Lu~Wang, and Weizhu Chen.
\newblock Lo{RA}: Low-rank adaptation of large language models.
\newblock In \emph{International Conference on Learning Representations}, 2022.
\newblock URL \url{https://openreview.net/forum?id=nZeVKeeFYf9}.

\bibitem[Hu et~al.(2024)Hu, Xian, Wu, Fan, Yin, and Zhao]{hu2024revisiting}
Yuzheng Hu, Ruicheng Xian, Qilong Wu, Qiuling Fan, Lang Yin, and Han Zhao.
\newblock Revisiting scalarization in multi-task learning: A theoretical perspective.
\newblock \emph{Advances in Neural Information Processing Systems}, 36, 2024.

\bibitem[Hwang et~al.(2023)Hwang, Weihs, Park, Lee, Kembhavi, and Ehsani]{hwang2023promptable}
Minyoung Hwang, Luca Weihs, Chanwoo Park, Kimin Lee, Aniruddha Kembhavi, and Kiana Ehsani.
\newblock Promptable behaviors: Personalizing multi-objective rewards from human preferences.
\newblock \emph{arXiv preprint arXiv:2312.09337}, 2023.

\bibitem[Jain et~al.(2023)Jain, Raparthy, Hern{\'a}ndez-Garc{\i}a, Rector-Brooks, Bengio, Miret, and Bengio]{jain2023multi}
Moksh Jain, Sharath~Chandra Raparthy, Alex Hern{\'a}ndez-Garc{\i}a, Jarrid Rector-Brooks, Yoshua Bengio, Santiago Miret, and Emmanuel Bengio.
\newblock Multi-objective gflownets.
\newblock In \emph{International Conference on Machine Learning}, pages 14631--14653. PMLR, 2023.

\bibitem[Jang et~al.(2023)Jang, Kim, Lin, Wang, Hessel, Zettlemoyer, Hajishirzi, Choi, and Ammanabrolu]{jang2023personalized}
Joel Jang, Seungone Kim, Bill~Yuchen Lin, Yizhong Wang, Jack Hessel, Luke Zettlemoyer, Hannaneh Hajishirzi, Yejin Choi, and Prithviraj Ammanabrolu.
\newblock Personalized soups: Personalized large language model alignment via post-hoc parameter merging.
\newblock \emph{arXiv preprint arXiv:2310.11564}, 2023.

\bibitem[Jaques et~al.(2019)Jaques, Ghandeharioun, Shen, Ferguson, Lapedriza, Jones, Gu, and Picard]{jaques2019way}
Natasha Jaques, Asma Ghandeharioun, Judy~Hanwen Shen, Craig Ferguson, Agata Lapedriza, Noah Jones, Shixiang Gu, and Rosalind Picard.
\newblock Way off-policy batch deep reinforcement learning of implicit human preferences in dialog.
\newblock \emph{arXiv preprint arXiv:1907.00456}, 2019.

\bibitem[Ji et~al.(2023{\natexlab{a}})Ji, Liu, Dai, Pan, Zhang, Bian, Chen, Sun, Wang, and Yang]{ji2023beavertails}
Jiaming Ji, Mickel Liu, Juntao Dai, Xuehai Pan, Chi Zhang, Ce~Bian, Boyuan Chen, Ruiyang Sun, Yizhou Wang, and Yaodong Yang.
\newblock Beavertails: Towards improved safety alignment of llm via a human-preference dataset.
\newblock In \emph{Thirty-seventh Conference on Neural Information Processing Systems Datasets and Benchmarks Track}, 2023{\natexlab{a}}.

\bibitem[Ji et~al.(2023{\natexlab{b}})Ji, Qiu, Chen, Zhang, Lou, Wang, Duan, He, Zhou, Zhang, et~al.]{ji2023ai}
Jiaming Ji, Tianyi Qiu, Boyuan Chen, Borong Zhang, Hantao Lou, Kaile Wang, Yawen Duan, Zhonghao He, Jiayi Zhou, Zhaowei Zhang, et~al.
\newblock Ai alignment: A comprehensive survey.
\newblock \emph{arXiv preprint arXiv:2310.19852}, 2023{\natexlab{b}}.

\bibitem[Kaufmann et~al.(2023)Kaufmann, Weng, Bengs, and H{\"u}llermeier]{kaufmann2023survey}
Timo Kaufmann, Paul Weng, Viktor Bengs, and Eyke H{\"u}llermeier.
\newblock A survey of reinforcement learning from human feedback.
\newblock \emph{arXiv preprint arXiv:2312.14925}, 2023.

\bibitem[Kwon et~al.(2023)Kwon, Li, Zhuang, Sheng, Zheng, Yu, Gonzalez, Zhang, and Stoica]{kwon2023efficient}
Woosuk Kwon, Zhuohan Li, Siyuan Zhuang, Ying Sheng, Lianmin Zheng, Cody~Hao Yu, Joseph~E. Gonzalez, Hao Zhang, and Ion Stoica.
\newblock Efficient memory management for large language model serving with pagedattention.
\newblock In \emph{Proceedings of the ACM SIGOPS 29th Symposium on Operating Systems Principles}, 2023.

\bibitem[Lee et~al.(2023)Lee, Phatale, Mansoor, Lu, Mesnard, Bishop, Carbune, and Rastogi]{lee2023rlaif}
Harrison Lee, Samrat Phatale, Hassan Mansoor, Kellie Lu, Thomas Mesnard, Colton Bishop, Victor Carbune, and Abhinav Rastogi.
\newblock Rlaif: Scaling reinforcement learning from human feedback with ai feedback.
\newblock \emph{arXiv preprint arXiv:2309.00267}, 2023.

\bibitem[Lee et~al.(2024)Lee, Li, Ke, Yoo, Zhang, Yu, Wang, Deng, Entis, He, et~al.]{lee2024parrot}
Seung~Hyun Lee, Yinxiao Li, Junjie Ke, Innfarn Yoo, Han Zhang, Jiahui Yu, Qifei Wang, Fei Deng, Glenn Entis, Junfeng He, et~al.
\newblock Parrot: Pareto-optimal multi-reward reinforcement learning framework for text-to-image generation.
\newblock \emph{arXiv preprint arXiv:2401.05675}, 2024.

\bibitem[Li et~al.(2020)Li, Zhang, and Wang]{li2020deep}
Kaiwen Li, Tao Zhang, and Rui Wang.
\newblock Deep reinforcement learning for multiobjective optimization.
\newblock \emph{IEEE transactions on cybernetics}, 51\penalty0 (6):\penalty0 3103--3114, 2020.

\bibitem[Lin et~al.(2019)Lin, Zhen, Li, Zhang, and Kwong]{lin2019pareto}
Xi~Lin, Hui-Ling Zhen, Zhenhua Li, Qing-Fu Zhang, and Sam Kwong.
\newblock Pareto multi-task learning.
\newblock \emph{Advances in neural information processing systems}, 32, 2019.

\bibitem[Lin et~al.(2020)Lin, Yang, Zhang, and Kwong]{lin2020controllable}
Xi~Lin, Zhiyuan Yang, Qingfu Zhang, and Sam Kwong.
\newblock Controllable pareto multi-task learning.
\newblock \emph{arXiv preprint arXiv:2010.06313}, 2020.

\bibitem[Lin et~al.(2022)Lin, Yang, Zhang, and Zhang]{lin2022pareto}
Xi~Lin, Zhiyuan Yang, Xiaoyuan Zhang, and Qingfu Zhang.
\newblock Pareto set learning for expensive multi-objective optimization.
\newblock \emph{Advances in Neural Information Processing Systems}, 35:\penalty0 19231--19247, 2022.

\bibitem[Liu et~al.(2021)Liu, Tong, and Liu]{liu2021profiling}
Xingchao Liu, Xin Tong, and Qiang Liu.
\newblock Profiling pareto front with multi-objective stein variational gradient descent.
\newblock \emph{Advances in Neural Information Processing Systems}, 34:\penalty0 14721--14733, 2021.

\bibitem[Miettinen(1999)]{miettinen1999nonlinear}
Kaisa Miettinen.
\newblock \emph{Nonlinear multiobjective optimization}, volume~12.
\newblock Springer Science \& Business Media, 1999.

\bibitem[Munos et~al.(2023)Munos, Valko, Calandriello, Azar, Rowland, Guo, Tang, Geist, Mesnard, Michi, et~al.]{munos2023nash}
R{\'e}mi Munos, Michal Valko, Daniele Calandriello, Mohammad~Gheshlaghi Azar, Mark Rowland, Zhaohan~Daniel Guo, Yunhao Tang, Matthieu Geist, Thomas Mesnard, Andrea Michi, et~al.
\newblock Nash learning from human feedback.
\newblock \emph{arXiv preprint arXiv:2312.00886}, 2023.

\bibitem[Navon et~al.(2020)Navon, Shamsian, Chechik, and Fetaya]{navon2020learning}
Aviv Navon, Aviv Shamsian, Gal Chechik, and Ethan Fetaya.
\newblock Learning the pareto front with hypernetworks.
\newblock \emph{arXiv preprint arXiv:2010.04104}, 2020.

\bibitem[Ouyang et~al.(2022)Ouyang, Wu, Jiang, Almeida, Wainwright, Mishkin, Zhang, Agarwal, Slama, Ray, et~al.]{ouyang2022training}
Long Ouyang, Jeffrey Wu, Xu~Jiang, Diogo Almeida, Carroll Wainwright, Pamela Mishkin, Chong Zhang, Sandhini Agarwal, Katarina Slama, Alex Ray, et~al.
\newblock Training language models to follow instructions with human feedback.
\newblock \emph{Advances in Neural Information Processing Systems}, 35:\penalty0 27730--27744, 2022.

\bibitem[Peters and Schaal(2007)]{peters2007reinforcement}
Jan Peters and Stefan Schaal.
\newblock Reinforcement learning by reward-weighted regression for operational space control.
\newblock In \emph{Proceedings of the 24th international conference on Machine learning}, pages 745--750, 2007.

\bibitem[Rafailov et~al.(2023)Rafailov, Sharma, Mitchell, Ermon, Manning, and Finn]{rafailov2023direct}
Rafael Rafailov, Archit Sharma, Eric Mitchell, Stefano Ermon, Christopher~D Manning, and Chelsea Finn.
\newblock Direct preference optimization: Your language model is secretly a reward model.
\newblock \emph{arXiv preprint arXiv:2305.18290}, 2023.

\bibitem[Rame et~al.(2023)Rame, Couairon, Shukor, Dancette, Gaya, Soulier, and Cord]{rame2023rewarded}
Alexandre Rame, Guillaume Couairon, Mustafa Shukor, Corentin Dancette, Jean-Baptiste Gaya, Laure Soulier, and Matthieu Cord.
\newblock Rewarded soups: towards pareto-optimal alignment by interpolating weights fine-tuned on diverse rewards.
\newblock \emph{arXiv preprint arXiv:2306.04488}, 2023.

\bibitem[Roijers et~al.(2015)Roijers, Whiteson, and Oliehoek]{roijers2015computing}
Diederik~Marijn Roijers, Shimon Whiteson, and Frans~A Oliehoek.
\newblock Computing convex coverage sets for faster multi-objective coordination.
\newblock \emph{Journal of Artificial Intelligence Research}, 52:\penalty0 399--443, 2015.

\bibitem[Swamy et~al.(2024)Swamy, Dann, Kidambi, Wu, and Agarwal]{swamy2024minimaximalist}
Gokul Swamy, Christoph Dann, Rahul Kidambi, Zhiwei~Steven Wu, and Alekh Agarwal.
\newblock A minimaximalist approach to reinforcement learning from human feedback.
\newblock \emph{arXiv preprint arXiv:2401.04056}, 2024.

\bibitem[Taori et~al.(2023)Taori, Gulrajani, Zhang, Dubois, Li, Guestrin, Liang, and Hashimoto]{taori2023alpaca}
Rohan Taori, Ishaan Gulrajani, Tianyi Zhang, Yann Dubois, Xuechen Li, Carlos Guestrin, Percy Liang, and Tatsunori~B Hashimoto.
\newblock Alpaca: A strong, replicable instruction-following model.
\newblock \emph{Stanford Center for Research on Foundation Models. https://crfm. stanford. edu/2023/03/13/alpaca. html}, 3\penalty0 (6):\penalty0 7, 2023.

\bibitem[Touvron et~al.(2023{\natexlab{a}})Touvron, Lavril, Izacard, Martinet, Lachaux, Lacroix, Rozi{\`e}re, Goyal, Hambro, Azhar, et~al.]{llama1}
Hugo Touvron, Thibaut Lavril, Gautier Izacard, Xavier Martinet, Marie-Anne Lachaux, Timoth{\'e}e Lacroix, Baptiste Rozi{\`e}re, Naman Goyal, Eric Hambro, Faisal Azhar, et~al.
\newblock Llama: Open and efficient foundation language models.
\newblock \emph{arXiv preprint arXiv:2302.13971}, 2023{\natexlab{a}}.

\bibitem[Touvron et~al.(2023{\natexlab{b}})Touvron, Martin, Stone, Albert, Almahairi, Babaei, Bashlykov, Batra, Bhargava, Bhosale, et~al.]{llama2}
Hugo Touvron, Louis Martin, Kevin Stone, Peter Albert, Amjad Almahairi, Yasmine Babaei, Nikolay Bashlykov, Soumya Batra, Prajjwal Bhargava, Shruti Bhosale, et~al.
\newblock Llama 2: Open foundation and fine-tuned chat models.
\newblock \emph{arXiv preprint arXiv:2307.09288}, 2023{\natexlab{b}}.

\bibitem[Wang et~al.(2024)Wang, Lin, Xiong, Yang, Diao, Qiu, Zhao, and Zhang]{wang2024arithmetic}
Haoxiang Wang, Yong Lin, Wei Xiong, Rui Yang, Shizhe Diao, Shuang Qiu, Han Zhao, and Tong Zhang.
\newblock Arithmetic control of llms for diverse user preferences: Directional preference alignment with multi-objective rewards.
\newblock \emph{arXiv preprint arXiv:2402.18571}, 2024.

\bibitem[Yang et~al.(2024{\natexlab{a}})Yang, Liu, Xie, Zhang, Song, Huang, Kuang, and Ananiadou]{yang2024metaaligner}
Kailai Yang, Zhiwei Liu, Qianqian Xie, Tianlin Zhang, Nirui Song, Jimin Huang, Ziyan Kuang, and Sophia Ananiadou.
\newblock Metaaligner: Conditional weak-to-strong correction for generalizable multi-objective alignment of language models.
\newblock \emph{arXiv preprint arXiv:2403.17141}, 2024{\natexlab{a}}.

\bibitem[Yang et~al.(2024{\natexlab{b}})Yang, Pan, Luo, Qiu, Zhong, Yu, and Chen]{yang2024rewards}
Rui Yang, Xiaoman Pan, Feng Luo, Shuang Qiu, Han Zhong, Dong Yu, and Jianshu Chen.
\newblock Rewards-in-context: Multi-objective alignment of foundation models with dynamic preference adjustment.
\newblock \emph{arXiv preprint arXiv:2402.10207}, 2024{\natexlab{b}}.

\bibitem[Yang et~al.(2019)Yang, Sun, and Narasimhan]{yang2019generalized}
Runzhe Yang, Xingyuan Sun, and Karthik Narasimhan.
\newblock A generalized algorithm for multi-objective reinforcement learning and policy adaptation.
\newblock \emph{Advances in neural information processing systems}, 32, 2019.

\bibitem[Yuan et~al.(2023)Yuan, Yuan, Tan, Wang, Huang, and Huang]{yuan2023rrhf}
Zheng Yuan, Hongyi Yuan, Chuanqi Tan, Wei Wang, Songfang Huang, and Fei Huang.
\newblock Rrhf: Rank responses to align language models with human feedback without tears.
\newblock \emph{arXiv preprint arXiv:2304.05302}, 2023.

\bibitem[Zhang and Li(2007)]{zhang2007moea}
Qingfu Zhang and Hui Li.
\newblock Moea/d: A multiobjective evolutionary algorithm based on decomposition.
\newblock \emph{IEEE Transactions on evolutionary computation}, 11\penalty0 (6):\penalty0 712--731, 2007.

\bibitem[Zhang et~al.(2023{\natexlab{a}})Zhang, Chen, Bukharin, He, Cheng, Chen, and Zhao]{zhang2023adaptive}
Qingru Zhang, Minshuo Chen, Alexander Bukharin, Pengcheng He, Yu~Cheng, Weizhu Chen, and Tuo Zhao.
\newblock Adaptive budget allocation for parameter-efficient fine-tuning.
\newblock In \emph{The Eleventh International Conference on Learning Representations}, 2023{\natexlab{a}}.
\newblock URL \url{https://openreview.net/forum?id=lq62uWRJjiY}.

\bibitem[Zhang et~al.(2023{\natexlab{b}})Zhang, Lin, Xue, Chen, and Zhang]{zhang2023hypervolume}
Xiaoyuan Zhang, Xi~Lin, Bo~Xue, Yifan Chen, and Qingfu Zhang.
\newblock Hypervolume maximization: A geometric view of pareto set learning.
\newblock In \emph{Thirty-seventh Conference on Neural Information Processing Systems}, 2023{\natexlab{b}}.

\bibitem[Zhou et~al.(2011)Zhou, Qu, Li, Zhao, Suganthan, and Zhang]{zhou2011multiobjective}
Aimin Zhou, Bo-Yang Qu, Hui Li, Shi-Zheng Zhao, Ponnuthurai~Nagaratnam Suganthan, and Qingfu Zhang.
\newblock Multiobjective evolutionary algorithms: A survey of the state of the art.
\newblock \emph{Swarm and evolutionary computation}, 1\penalty0 (1):\penalty0 32--49, 2011.

\bibitem[Zhou et~al.(2023)Zhou, Liu, Yang, Shao, Liu, Yue, Ouyang, and Qiao]{zhou2023beyond}
Zhanhui Zhou, Jie Liu, Chao Yang, Jing Shao, Yu~Liu, Xiangyu Yue, Wanli Ouyang, and Yu~Qiao.
\newblock Beyond one-preference-for-all: Multi-objective direct preference optimization.
\newblock \emph{arXiv preprint arXiv:2310.03708}, 2023.

\bibitem[Zhu et~al.(2023)Zhu, Wu, Hu, Yan, Hsieh, Hou, and Wu]{zhu2023sample}
Yiheng Zhu, Jialu Wu, Chaowen Hu, Jiahuan Yan, Chang-Yu Hsieh, Tingjun Hou, and Jian Wu.
\newblock Sample-efficient multi-objective molecular optimization with gflownets.
\newblock \emph{arXiv preprint arXiv:2302.04040}, 2023.

\end{thebibliography}

\newpage


\appendix
\onecolumn

\clearpage





\doparttoc 
\faketableofcontents 
\renewcommand \thepart{}
\renewcommand \partname{}

\addcontentsline{toc}{section}{Appendix} 
\part{Supplementary Material} 
\parttoc 
\newpage
\section{Preliminary Theoretical Results}
\label{app:pre-thry}
In this section, we prove the validity of combining reward models of all preference dimensions through linear scalarization in the RLHF optimization procedure, even though each reward model solved by the Bradley-Terry (BT) model \cite{bradley1952rank} is not uniquely determined. This is formalized in the following lemma.

\begin{lemma} [Extension of Lemma 2 in \cite{rafailov2023direct} for multiple reward models]\label{lem:pre}
    Let \( r_i(x,y) \) and \( r_i'(x,y) \) be equivalent reward models for the $i$-th preference dimension, where \( r_i'(x,y) = r_i(x,y) + \phi_i(x) \). The linear combinations \( r(x,y) = \sum_{i=1}^m \lambda_i r_i(x,y) \) and \( r'(x,y) = \sum_{i=1}^m \lambda_i r_i(x,y) + \sum_{i=1}^m \lambda_i \phi_i(x) \) induce the same optimal policy in the constrained RL problem $\max_{\pi} J_{\text{RLHF}}(\pi) = \mathbb{E}_{x\sim\mathcal{D}}\left[\mathbb{E}_{y \sim \pi(\cdot|x)}\left[r(x, y)\right] - \beta\mathbb{D}_{\text{KL}}\left[\pi(\cdot|x)||\pi_{\text{ref}}(\cdot|x)\right]\right]$, where $\beta$ is a positive punishment factor of the KL constraint. 
\end{lemma}

\begin{remark}
This lemma demonstrates that it is valid to linearly combine reward models of all dimensions, even if the reward models are not uniquely identified. It is used in analyzing the limitations of single-objective alignment and it validates the LS aggregation employed with Panacea.
\end{remark}

Below, we provide a concise proof of \Cref{lem:pre}.
\begin{proof}
According to the constrained RL literatures \cite{peters2007reinforcement,berry2011permutation}, the policy for the reward function \(r'(x,y)\) in a Kullback-Leibler (KL) constrained reinforcement learning (RL) problem can be formulated as follows:

$$
\pi_{r'}(y | x) = \frac{\pi_{\text{ref}}(y | x) \exp \left(\frac{1}{\beta} r'(x, y)\right)}{\sum_y \pi_{\text{ref}}(y | x) \exp \left(\frac{1}{\beta} r'(x, y) \right)}.
$$

Expanding the term in \(r'(x,y)\), we obtain:

$$
\pi_{r'}(y | x) = \frac{\pi_{\text{ref}}(y | x) \exp \left( \frac{1}{\beta}\left( \sum_{i=1}^m \lambda_i r_i(x,y) + \underbrace{\sum_{i=1}^m \lambda_i \phi_i(x)}_{\phi^\prime(x)}  \right)\right)}{\sum_y \pi_{\text{ref}}(y | x) \exp \left(\frac{1}{\beta}\left( \sum_{i=1}^m \lambda_i r_i(x,y) + \underbrace{\sum_{i=1}^m \lambda_i \phi_i(x)}_{\phi^\prime(x)} \right)\right)}.
$$

Upon simplifying by canceling out the common term $\exp(\phi^\prime(x))$, we get:
\[
\pi_{r'}(y | x) = \frac{\pi_{\text{ref}}(y | x) \exp \left(\frac{1}{\beta} r(x, y)\right) \cancel{ \exp \left( \frac{1}{\beta} (\phi^\prime(x) ) \right) } }{\sum_y \pi_{\text{ref}}(y | x) \exp \left(\frac{1}{\beta} r(x, y)\right) \cancel{\exp \left(\frac{1}{\beta} ( \phi^\prime(x) )\right)}} = \pi_r (y | x),
\]
which completes the proof. 



\end{proof}

\section{The Limitation of Single-Objective Alignment}
\label{app:single-analysis}
In the following content, we provide a theoretical analysis that the model trained by the single-objective alignment paradigm could actually misalign with every labeler. We conduct analysis on RLHF, the most common approach. We make the following assumptions:

\begin{ass}\label{ass:BT}
    Human preference can be modeled by the Bradley-Terry model \cite{bradley1952rank}.
\end{ass}

\begin{ass}
    Different people are consistent in labeling each preference dimension.
\end{ass}

These two assumptions imply that people possess the same reward model $r_i(x, y)$ for each preference dimension $i$.

\begin{ass}
    The synthesized reward model of a person is the LS of per-dimensional reward models according to his/her preference vector under a shift invariant term (c.f \cite{rafailov2023direct}[Lemma1]). That is,
    \begin{equation}
        r(x,y) = \sum_{i = 1}^m \lambda_i r_i(x,y) + \phi(x).
    \end{equation}
\end{ass}

Now we prove the main theoretical result.
\begin{theorem}
    Consider the case where there are $n$ labelers in total. Each labeler $h$ labels a portion $p^h$ of the entire dataset, where $p^h \in [0,1], \sum_{h=1}^n p^h = 1$. The preference vector of labeler $h$ is $\vlam^h = (\lambda^h_1, \lambda^h_2, \ldots, \lambda^h_m)$. The labelers have different preference vectors, \emph{i.e.} $\exists\ j, h \in \{1, \ldots, n\}, \vlam^j\neq\vlam^h$. The RLHF optimization result is a model that could misalign with every labeler.
\end{theorem}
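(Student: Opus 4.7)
The plan is to argue that single-objective RLHF on the pooled dataset effectively targets an \emph{averaged} reward model, whose induced Pareto-optimal policy generically matches no individual labeler's ideal. First, I would characterize the reward model that RLHF recovers in the population limit. Under Assumption~\ref{ass:BT}, labeler $i$ produces pairwise labels according to the Bradley--Terry distribution with reward $r^i(x,y) = \sum_{j=1}^m \lambda^i_j r_j(x,y)$. Training a single BT reward model $\hat r$ by maximum likelihood on the pooled data maximizes a $p^i$-weighted sum of per-labeler cross-entropies, whose stationary point (at least to first order in the reward gaps, where $\sigma(a)\approx 1/2 + a/4$) coincides with the averaged reward
\begin{equation*}
\bar r \,=\, \sum_{i=1}^n p^i\, r^i \,=\, \sum_{j=1}^m \bar\lambda_j\, r_j, \qquad \bar\lambda_j \,:=\, \sum_{i=1}^n p^i \lambda^i_j,
\end{equation*}
and $\bar{\vlam}=(\bar\lambda_1,\dots,\bar\lambda_m)\in\Delta_{m-1}$ since $\sum_i p^i = 1$.

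Next I would push this through the PPO stage. The RLHF objective with $\hat r = \bar r$ is, by linearity of expectation, exactly the linear scalarization $g^{\mathrm{LS}}_{\bar\vlam}$ of the per-dimension RLHF objectives $J_i$ (cf.~\eqref{eqn:ls}). By Proposition~\ref{cor:pf_ccs}, its maximizer $\pi^*_{\bar\vlam}$ is Pareto optimal and its objective vector lies on $\pf=\ccs$ at the support point selected by $\bar\vlam$.

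The final step is to show that $\pi^*_{\bar\vlam}$ misaligns with every labeler. Because there exist $i,j$ with $\vlam^i\neq\vlam^j$, the barycentre $\bar\vlam$ is a \emph{strict} convex combination of the $\{\vlam^i\}$ and therefore differs from $\vlam^i$ for all but at most one index. Under a generic-position assumption on the PF (e.g., strict convexity, which is consistent with the convex objective space of Lemma~\ref{lem:convex}), the map $\vlam\mapsto \pi^*_\vlam$ arising from LS aggregation is injective, so $\pi^*_{\bar\vlam}\neq \pi^*_{\vlam^i}$ whenever $\vlam^i\neq\bar\vlam$. Hence the RLHF-trained model fails to be Pareto optimal for at least $n-1$ of the labelers' true preferences, which establishes the claim that the single-objective outcome \emph{could} misalign with every labeler.

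The step I expect to be the main obstacle is the first one: a mixture of Bradley--Terry models is not itself Bradley--Terry, so identifying the MLE $\hat r$ with the pointwise average $\bar r$ is only exact in special regimes. I would handle this either by the first-order Taylor expansion of $\sigma$ indicated above, or by invoking the standard reward-modelling convention that rewards are identified up to affine transformation and the associated calibration: under this convention, the PPO objective, being linear in the reward, cleanly inherits the $p^i$-weighted averaging across labelers, so the argument in the subsequent paragraphs goes through verbatim.
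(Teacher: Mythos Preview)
Your core computation---that the pooled RLHF objective is the linear scalarization with the averaged preference vector $\bar{\vlam}=\sum_i p^i\vlam^i$---is exactly what the paper proves. But the paper takes the simpler route you relegate to your final paragraph: it explicitly restricts attention to the PPO step (``To facilitate comprehension, we analyze the PPO step of RLHF''), writes the pooled objective directly as $\sum_i p^i J^i(\theta)$ with each $J^i$ using labeler $i$'s \emph{true} reward $r^i=\sum_j \lambda^i_j r_j$, and then collapses the double sum by linearity. No reward-learning stage, no Bradley--Terry mixture, no Taylor expansion. Your identified ``main obstacle'' is thus an artifact of the more circuitous path you chose; the paper never confronts it.

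Your final step over-engineers what the paper treats in one line. The theorem's conclusion is only that the model \emph{could} misalign with every labeler---a modal claim---so it suffices to observe that $\bar{\vlam}$ can differ from every $\vlam^i$ (easy: take $n=2$, $p^1=p^2=1/2$, $\vlam^1\neq\vlam^2$). You instead invoke Proposition~\ref{cor:pf_ccs}, strict convexity of the PF, and injectivity of $\vlam\mapsto\pi^*_\vlam$, none of which the paper uses. Moreover, your intermediate claim that a strict convex combination ``differs from $\vlam^i$ for all but at most one index'' is false: with $\vlam^1=(0.3,0.7)$, $\vlam^2=(0.5,0.5)$, $\vlam^3=(0.7,0.3)$ and equal weights, $\bar{\vlam}=\vlam^2$. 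And concluding ``misaligns with at least $n-1$ labelers'' does not by itself yield ``could misalign with every labeler''. The paper sidesteps all of this by leaving the conclusion at the modal level.
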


\begin{proof}
    The reward model $r^h$ of labeler $h$ is $r^h(x,y) = \sum_{i=1}^m\lambda^h_ir_i(x, y) + \phi^h(x)$. $J^h(\theta)$ denotes the optimization objective corresponding to the reward model of labeler $h$. The joint optimization objective is

\begin{align}
    &\max_\theta\ \sum_{h=1}^np^hJ^h( \pi_\theta ) \nonumber \\
    & \text{(Substituting the oracle reward function.)} \\
    = & \max_\theta \sum_{h=1}^np^h\left(\mathbb{E}_{x \sim \mathcal{D}}\left[\mathbb{E}_{y \sim \pi_{\theta}(\cdot|x)}\left[r^h(x, y)\right] - \beta\mathbb{D}_{\text{KL}}\left[\pi_{\theta}(\cdot|x)||\pi_{\text{ref}}(\cdot|x)\right]\right]\right) \nonumber \\
     & \text{(Rearrange reward terms.)} \\
     = & \max_\theta \mathbb{E}_{x \sim \mathcal{D}}\left[\mathbb{E}_{ y \sim \pi_\theta(\cdot|x)} \left[\sum_{h=1}^n p^h r^h(x, y) \right] 
     -\beta\mathbb{D}_{\text{KL}} \left[ \pi_\theta(\cdot|x)||\pi_{\text{ref}}(\cdot|x) \right]\right] \nonumber \\
     = & \max_\theta \mathbb{E}_{x \sim \mathcal{D}}\left[\mathbb{E}_{ y \sim \pi_\theta(\cdot|x)} \left[\sum_{h=1}^n p^h \left(\sum_{i=1}^m\lambda^h_ir_i(x, y) + \phi^h(x)\right) \right] 
     -\beta\mathbb{D}_{\text{KL}} \left[ \pi_\theta(\cdot|x)||\pi_{\text{ref}}(\cdot|x) \right]\right] \nonumber \\
     & \text{(Define $\varphi(x) \defeq \sum_{h=1}^n p^h \phi^h(x)$)} \\
     = & \max_\theta \mathbb{E}_{x \sim \mathcal{D}}\left[\mathbb{E}_{ y \sim \pi_\theta(\cdot|x)} \left[\sum_{h=1}^n \sum_{i=1}^m p^h\lambda^h_ir_i(x, y) + \varphi(x) \right] 
     -\beta\mathbb{D}_{\text{KL}} \left[ \pi_\theta(\cdot|x)||\pi_{\text{ref}}(\cdot|x) \right] \right]\nonumber \\
     = & \max_\theta \mathbb{E}_{x \sim \mathcal{D}}\left[\mathbb{E}_{ y \sim \pi_\theta(\cdot|x)} \left[ \sum_{i=1}^m \sum_{h=1}^n p^h\lambda^h_ir_i(x, y) + \varphi(x) \right] 
     -\beta\mathbb{D}_{\text{KL}} \left[ \pi_\theta(\cdot|x)||\pi_{\text{ref}}(\cdot|x) \right] \right]\nonumber \\
     = & \max_\theta \mathbb{E}_{x \sim \mathcal{D}}\left[\mathbb{E}_{ y \sim \pi_\theta(\cdot|x)} \left[ \sum_{i=1}^m \left(\sum_{h=1}^n p^h\lambda^h_i\right)r_i(x, y) + \varphi(x) \right] 
     -\beta\mathbb{D}_{\text{KL}} \left[ \pi_\theta(\cdot|x)||\pi_{\text{ref}}(\cdot|x) \right] \right]\nonumber \\
     & \text{(Define $\lambda_i^{\text{opt}} \defeq \sum_{h=1}^np^h\lambda^h_i, i = 1, \ldots, m$)} \\
     = & \max_\theta \mathbb{E}_{x \sim \mathcal{D}}\left[\mathbb{E}_{ y \sim \pi_\theta(\cdot|x)} \left[ \sum_{i=1}^m \lambda^{\text{opt}}_ir_i(x, y) + \varphi(x) \right] 
     -\beta\mathbb{D}_{\text{KL}} \left[ \pi_\theta(\cdot|x)||\pi_{\text{ref}}(\cdot|x) \right] \right]\nonumber \\
 \end{align}

Thus, we show that it actually optimizes with the preference vector $\vlam^{\text{opt}}$, with $\lambda_i^{\text{opt}} = \sum_{h=1}^np^h\lambda^h_i, i = 1, \ldots, m$. According to the constrained RL literatures \cite{peters2007reinforcement,berry2011permutation}, the corresponding optimal policy can be expressed as:
\begin{equation}
    \pi_\theta^*(y | x)=\frac{1}{Z(x)} \pi_{\text {ref }}(y | x) \exp \left(\frac{1}{\beta} \sum_{i=1}^m \lambda_i^{\text{opt}} r_i(x, y) \right).
\end{equation}

It is important to note that this optimal preference vector may not align with the individual preferences of each annotator. As a result, the trained model may not fully reflect the labeling criteria of any single annotator, potentially leading to discrepancies in the model's predictions.

\end{proof}

\section{Theoretical Support for Panacea with LS / Tche function} \label{sec:represent}
In the following content, we prove for \Cref{thm:full:represent} from the main paper, showing that both linear and Tchebycheff scalarization can recover the entire Pareto Front (PF) under practical assumptions. The proof has two subsections: first for the linear scalarization function in \Cref{sec:proof:c1}, followed by the Tchebycheff aggregation function in \Cref{sec:proof:c2}.

\subsection{Proof for LS Aggregation Function} \label{sec:proof:c1}
We provide a proof sketch for this part.

\textbf{Step 1:} Under the full categorical representation assumption, for any two policies \( \pi^{(a)}(\cdot | x) \) and \( \pi^{(b)}(\cdot | x) \), we can create a new policy (\(\pi'\)) that, with probability (w.p.) \( p \) (where \( 0 \leq p \leq 1 \)), takes \( \pi^{(a)}(\cdot | x) \) and w.p. \( 1-p \), takes \( \pi^{(b)}(\cdot | x) \). This policy can also be represented by LLM.

\textbf{Step 2:} Using the above policy construction method, we prove that the objective spaces of DPO, RLHF, and SFT are convex.

\textbf{Step 3:} When the objective spaces are convex, the Pareto objectives found by LS aggregation function (Convex coverage set (CCS)) equal the entire Pareto front.

\textbf{Step 4:} By optimizing the Panacea objective function \(\E_{\vlam \in \Delta_m} \left[g^\mathrm{LS}_\vlam(\theta)\right]\), we can recover the entire Pareto front.

Then, we start our formal proof. We first restate the assumption for the full categorical policy space in \Cref{thm:full:represent}. 
\begin{ass}[Full Categorical Policy Space Assumption (detailed restatement from Assumption 2 in \Cref{thm:full:represent})] \label{ass:app:full}
For a specific preference vector $\vlam$, the LLM policy space formed by all $y \sim \pi_{\theta, \vlam}(\cdot | x)$ can represent all the categorical distribution set $\Pi(x)$ for response $y = [t_1, \ldots, t_N]$, where $N$ is the response length and $t_i$ denote each token, given an input sentence $x$.
\end{ass}
This assumption is proper because the probability of each token \( t_1, \ldots, t_N \) ($N$ denotes the length of the output of $y$) can be represented by a LLM policy. Given the strong representation ability of LLMs, any probability value of token sequence \( t_1, \ldots, t_N \) can be represented by their output. With this assumption, a direct corollary holds because the linear combination of categorical distributions is still a categorical distribution.  

As a corollary of \Cref{ass:app:full}, we have: 
\begin{corollary} \label{coro:inter}
    For two policies \( \pi^{(a)}(\cdot | x) \) and \( \pi^{(b)}(\cdot | x) \), a new policy \( \pi^\prime \) w.p. \( p \) (\( 0 \leq p \leq 1 \)) follows \( \pi^{(a)}(\cdot | x) \) and w.p. \( 1-p \) follows $\pi^{(b)}(\cdot | x)$ belongs to the categorical distribution \( \Pi(x) \).
\end{corollary}
The reason for that is such constructed policy is still a categorical distribution. For the next step, we use this corollary to prove the following lemma to show that the objective spaces \(\vJ_\mathrm{SFT}\), \(\vJ_\mathrm{RLHF}\), and \(\vJ_\mathrm{DPO}\) are convex.

\begin{lemma}[ Convex space Lemma, adapted from \cite{hu2024revisiting}(Eq. 13) ]
\label{lem:convex}
    For any two objectives \(\vJ^{(a)}_\mathrm{alg}\) and \(\vJ^{(b)}_\mathrm{alg}\), and for any \(0 < \alpha < 1\), there exists a policy \(\pi' \in \Pi(x)\) such that \(\alpha \vJ^{(a)}_\mathrm{alg} + (1-\alpha) \vJ^{(b)}_\mathrm{alg} = \vJ(\pi')\), where $\vJ_\mathrm{alg}$ can be $\vJ_\mathrm{DPO}$, $\vJ_\mathrm{SFT}$, or $\vJ_\mathrm{RLHF}$.    
\end{lemma}

This lemma mainly follows from Eq. 13 in \cite{hu2024revisiting}. We include their proof for our purpose for completeness. The objectives \(\vJ_\mathrm{SFT}\), \(\vJ_\mathrm{RLHF}\), and \(\vJ_\mathrm{DPO}\) can all be written as \(\vJ_\mathrm{alg}(\pi) = \E_{{x,y} \in D} [\vf(x,y, \pi(y | x))]\) for some particular design of $\vf(x,y, \pi(y | x))$. For any \(0 \leq \alpha \leq 1\), by \Cref{coro:inter}, we can construct a new policy \(\pi'\) and a uniform random variable \(S \sim U(0,1)\) such that:
\[
\pi'(y|x) = 
\begin{cases} 
    \pi^a(y|x) & \text{if } S < \alpha \\
    \pi^b(y|x) & \text{if } S \geq \alpha 
\end{cases}
\]
Then,
\begin{align*}
    \vJ(\pi') &= \E_{(x,y) \sim \mathcal{D}} [\vf(x,y, \pi'(y | x))] \\ 
    &= \E_{S \sim U(0,1)} \E_{(x,y) \sim \mathcal{D}} [\vf(x,y, \pi'(y | x)) | S] \\ 
    &= \alpha \E_{(x,y) \sim \mathcal{D}} [\vf(x,y, \pi'(y | x)) | S < \alpha] + (1-\alpha) \E_{(x,y) \sim \mathcal{D}} [\vf(x,y, \pi'(y | x)) | S \geq \alpha] \\ 
    &= \alpha \E_{(x,y) \sim \mathcal{D}} [\vf(x,y, \pi^{(a)}(y | x))] + (1-\alpha) \E_{(x,y) \sim \mathcal{D}} [\vf(x,y, \pi^{(b)}(y | x))] \\ 
    &= \alpha \vJ(\pi^{(a)}) + (1-\alpha) \vJ(\pi^{(b)})
\end{align*}

Thus, for any convex combination of \(\vJ(\pi^{(a)})\) and \(\vJ(\pi^{(b)})\), there exists a policy \(\pi'\) such that \(\vJ(\pi') = \alpha \vJ(\pi^{(a)}) + (1-\alpha) \vJ(\pi^{(b)})\), indicating that the space of \(\vJ(\pi)\) is convex. We denote the full space of \(\vJ(\pi)\) for all policies as \(\sJ\).

For the third step, we use \Cref{lem:convex} to establish that linear scalarization functions have the capability to discover the complete PF by traversing the entire preference simplex $\Delta_m$ (i.e., the approach employed in Panacea). To prove for that, we introduce the concept of the convex coverage set, which is the objective set that can be found by optimizing the linear scalization function with all preference vector $\boldsymbol{\lambda} \in \Delta_m$. 
We now define CCS, which is the set of solutions can be found LS. 
\begin{definition}[Convex Coverage Set (CCS), adapted from \cite{roijers2015computing}(Def. 9)]
   The CCS contains the objective such that there exists a preference vector \(\vlam\) where the inner product of \(\vlam\) and this objective is greater than that of \(\vlam\) with any other objective vectors in the objective space.
    CCS $:=\left\{\vJ \in \sJ | \exists \vlam \in \Delta_m \right.$ s.t. $\vlam^{\top} \vJ \geq \vlam^{\top} \vJ', \forall \vJ' \in \sJ \}$.
\end{definition}
Finally, we prove for that that when the objective space is convex, the linear scalarization can recover the whole Pareto objective set, i.e., $\pf = \ccs$, where $\pf$ denote the objective vectors forming the Pareto front.  
\begin{proof}
    The PF $\pf$ is a subset of the boundary of the objective space, denoted as $\partial(\vJ(\Pi))$. By proving that $\vJ(\Pi)$ is a convex set, we can apply the supporting hyperplane theorem \cite{boyd2004convex} (Sec. 2.5.2). According to this theorem, for every element $\vr$ in $\partial(\vJ(\Pi))$, there exists $\vlam \in \mathbb{R}$ such that $\vlam^T (\vr - \vr') \geq 0$ for all $\vr' \in \vJ(\Pi)$. Moreover, when $\vr$ is Pareto optimal, such $\vlam \succeq 0$. Hence, we have $\vlam^T (\vr - \vr') \geq 0$ for all $\vr' \in \vJ(\Pi)$ and $\vlam \in \Delta_m$. This condition implies that $\pf \subset \ccs$. Since it has been established that $\ccs \subset \pf$, we can conclude that $\ccs = \pf$.
\end{proof}

For the last step, we demonstrate that by optimizing \(\E_{\vlam \in \Delta_m} \left[g^\mathrm{LS}_\vlam(\theta)\right]\) using the LS aggregation function, we can recover almost the entire Pareto front. This is because, if a larger non-zero measure Pareto front could not be found, it implies that there exist non-zero measure preference vectors that would make the expectation function value \(\E_{\vlam \in \Delta_m} \left[g^\mathrm{LS}_\vlam(\theta)\right]\) exceed its optimal value, which is contradictory of our assumption.

\subsection{Proof for Tchebycheff Aggregation Function} \label{sec:proof:c2}
To prove that using the Tchebycheff aggregation function allows Panacea to recover the full Pareto front, we introduce the following lemma:

\begin{lemma}[Adapted from \cite{choo1983proper}, Theorem 3.1] \label{lemma:tche}
A feasible solution $\theta$ is Pareto optimal if and only if there exists a weight vector $\lambda$ such that $\theta$ is an optimal solution to the aggregation function (\Cref{eqn:tche}) defined in the main paper.
\end{lemma}

Using this lemma and assuming Panacea can represent the Pareto policy under all preferences (Assumption 1 in \Cref{thm:full:represent}), optimizing the expectation loss

\[
-\E_{\vlam \in \Delta_m} g^\mathrm{Tche}_\vlam(\theta)
\]

allows Panacea to recover almost every policy.

\begin{proof}
    If a non-Pareto policy has a measure greater than zero, then according to \Cref{lemma:tche}, there exists a preference set of greater than zero measure where the non-Pareto policy has a higher value compared to the optimal value of the Tchebycheff function under the corresponding preferences. This implies that \(\E_{\vlam \in \Delta_m} g^\mathrm{Tche}_\vlam(\theta)\) has not been optimized to its optimal value, contradicting Assumption 1 in \Cref{thm:full:represent}.
\end{proof}

\section{Aggregated Training Objectives for Panacea}
\label{app:agg}
In this section, we present the LS / Tche aggregated training objectives for Panacea with RLHF / DPO / SFT. In RLHF, reward models $r_i, i = 1, \ldots, m$ are learned for each preference dimension. For a specific preference vector, the LS aggregated objective function is

\begin{align}
\max_{\theta} g^\mathrm{LS}_\vlam(\theta) =\max_{\theta}\ \mathbb{E}_{x \sim \mathcal{D}}\left[\mathbb{E}_{y \sim \pi_{\theta, \vlam}(\cdot|x)}\left[\sum_{i = 1}^m\lambda_ir_i(x, y)\right] - \beta\mathbb{D}_{\text{KL}}\left[\pi_{\theta, \vlam}(\cdot|x)||\pi_{\text{ref}}(\cdot|x)\right]\right].    
\end{align}

The Tche aggregated objective is

\begin{align}
    \max_{\theta} g^\mathrm{Tche}_\vlam(\theta) =\max_{\theta}\ \mathbb{E}_{x \sim \mathcal{D}}\left[\mathbb{E}_{y \sim \pi_{\theta, \vlam}(\cdot|x)}\left[-\max_{1\leq i\leq m}\lambda_i (z_i - r_i(x, y))\right] -\beta\mathbb{D}_{\text{KL}}\left[\pi_{\theta,\vlam}(\cdot|x)||\pi_{\text{ref}}(\cdot|x)\right]\right],
\end{align}

where $z_i$ is the maximum reward for preference dimension $i$. Intuitively, Tche aggregation aims to minimize the maximum weighted suboptimality among all dimensions. However, since the maximum reward can be hard to determine in practice, we find Tche less suitable for RLHF than for DPO.

DPO transforms the reinforcement learning objective into a supervised objective, whose LS aggregated objective is
\begin{align}\label{eq:dpo-ls}
\max_{\theta} g^\mathrm{LS}_\vlam(\theta) =& \max_{\theta} \sum_{i=1}^m \lambda_iJ_{\text{DPO}, i}(\pi_{\theta, \vlam}) \nonumber \\
    =& \max_{\theta}\sum_{i=1}^m \lambda_i\E_{(x, y_w, y_l)\sim\mathcal{D}_i} \left[\log\sigma\left(\beta \log \frac{\pi_{\theta, \vlam}\left(y_w | x\right)}{\pi_{\mathrm{ref}}\left(y_w | x\right)}-\beta \log \frac{\pi_{\theta, \vlam}\left(y_l | x\right)}{\pi_{\mathrm{ref}}\left(y_l | x\right)}\right)\right].
\end{align}

To derive the Tche aggregated objective, we have
\begin{align}\label{eq:dpo-tche}
\max_{\theta} g^\mathrm{Tche}_\vlam(\theta) = & \max_{\theta} \min_{1\leq i \leq m} \lambda_i (J_{\text{DPO}, i}(\pi_{\theta, \vlam}) - z_i) \nonumber \\
= & \max_{\theta} \min_{1\leq i \leq m} \lambda_i J_{\text{DPO}, i}(\pi_{\theta, \vlam}) \nonumber \\
= & \max_{\theta} \min_{1\leq i \leq m} \lambda_i \E_{(x, y_w, y_l)\sim\mathcal{D}_i} \left[\log\sigma\left(\beta \log \frac{\pi_{\theta, \vlam}\left(y_w | x\right)}{\pi_{\mathrm{ref}}\left(y_w | x\right)}-\beta \log \frac{\pi_{\theta, \vlam}\left(y_l | x\right)}{\pi_{\mathrm{ref}}\left(y_l | x\right)}\right)\right]
\end{align}

Since the optimal value $z_i$ for per-dimension DPO objective is $0$, this is naturally compatible with Tche aggregation.

Finally, the LS aggregated SFT objective is

\begin{align}\label{eq:sft-ls}
\max_{\theta} g^\mathrm{LS}_\vlam(\theta) = \max_{\theta} \sum_{i=1}^m \lambda_iJ_{\text{SFT}, i}(\pi_{\theta, \vlam})
    = \max_{\theta}\sum_{i=1}^m \lambda_i\mathbb{E}_{(x, y)\sim\mathcal{D}_i}\left[\log \pi_{\theta, \vlam}(y|x)\right].
\end{align}
Similar to DPO, since the optimal value $z_i$ for per-dimension SFT objective is 0, the Tche aggregation of SFT objectives is
\begin{align}\label{eq:sft-tche}
\max_{\theta} g^\mathrm{Tche}_\vlam(\theta) = & \max_{\theta} \min_{1\leq i \leq m} \lambda_i (J_{\text{SFT}, i}(\pi_{\theta, \vlam}) - z_i) \nonumber \\
= & \max_{\theta} \min_{1\leq i \leq m} \lambda_i J_{\text{SFT}, i}(\pi_{\theta, \vlam}) \nonumber \\
= & \max_{\theta} \min_{1\leq i \leq m} \lambda_i \mathbb{E}_{(x, y)\sim\mathcal{D}_i}\left[\log \pi_{\theta, \vlam}(y|x)\right].
\end{align}

\section{Experiment Details and Additional Results}
\label{sec:app:details}


In this section, we present experimental details including computational resources, algorithm implementation, data curation, experiment setup, and evaluation details, and analyze additional results. All our experiments are conducted on an 8$\times$A800-80GB GPU server. Other details are elaborated below.

\subsection{Core Implementation of Panacea}
\label{app:code}

\begin{figure}[h]
    \centering
    \includegraphics[width=1\columnwidth]{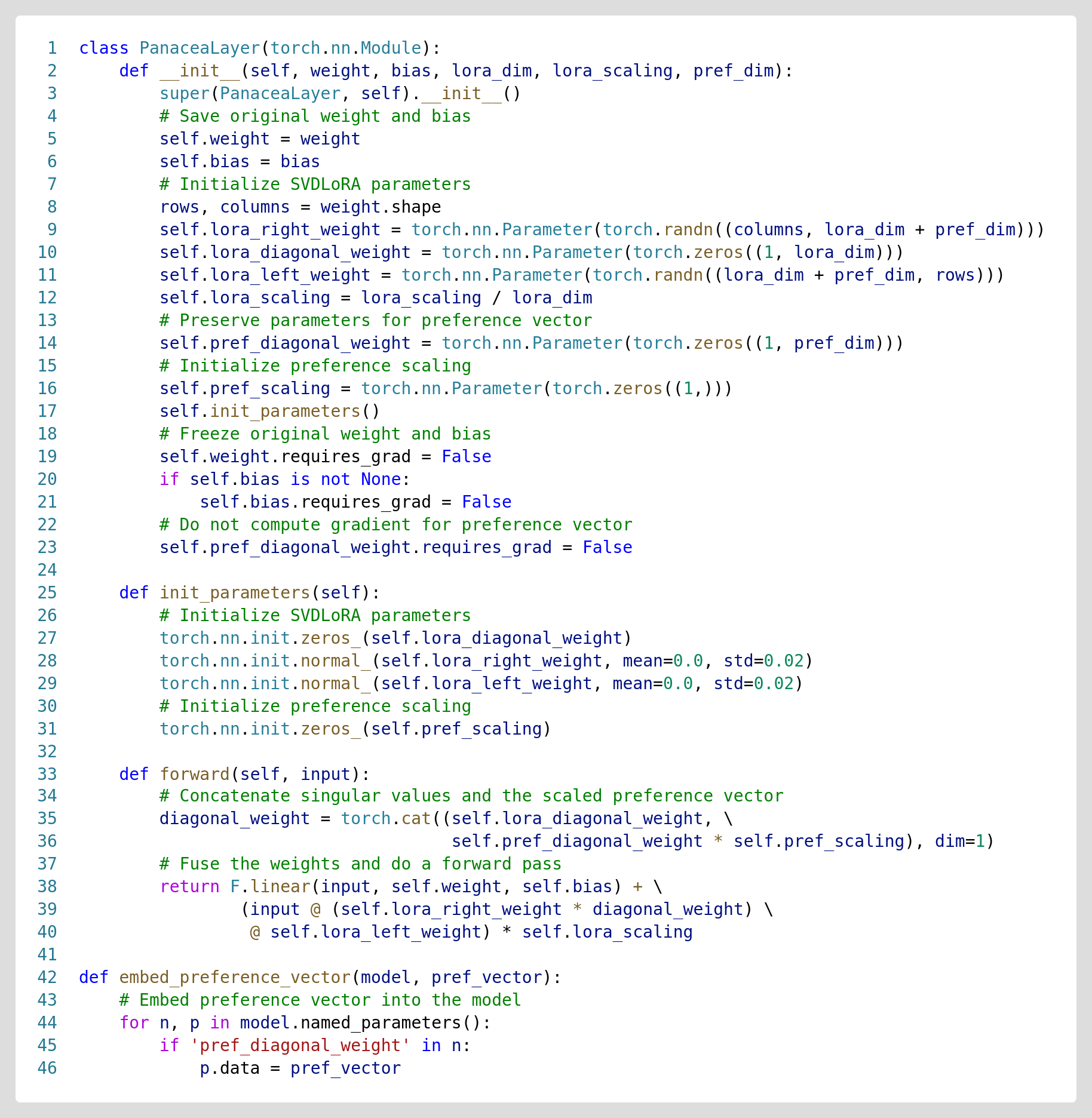}
    \vspace{-10pt}
    \caption{Core implementation of Panacea.}
    \vspace{-10pt}
\label{fig:panacea-code}
\end{figure}

Our implementation is based on the Safe-RLHF \cite{dai2023safe} codebase. As described in \Cref{sec:panacea} and visualized in \Cref{fig:Panacea}, the core design of Panacea is the embedding of the preference vector as singular values based on SVD-LoRA. Its core code is presented in \Cref{fig:panacea-code}. In our experiments, we perform Panacea adaptation to all self-attention and MLP layers. We initialize the singular values and preference scaling to zero, so as not to impact the model behavior at the beginning of training \cite{hu2022lora, zhang2023adaptive}. In each iteration, we sample a preference vector from the preference simplex, embed it into the model, and train the model on the aggregated objective.

\subsection{Data Curation}
\label{app:data-curation}

In the helpful-harmless (HH) problem in \Cref{sec:HH}, we use the BeaverTails dataset \cite{ji2023beavertails}, which contains both helpfulness and harmlessness preference labels. In the augmented helpful-harmless-concise (HHC) problem in \Cref{sec:HHC}, we again use the BeaverTails dataset. For RLHF, we define the reward model as a rectified affine function,

$$
r_{\text{concise}}(x, y) = \begin{cases}
    r_{\text{max}},\ l_y \leq c  \\
    r_{\text{max}} + 1 - \frac{l_y}{c}, \ \text{otherwise} \\
\end{cases}
$$
where $r_{\text{max}}$ defines the maximum reward, $l_y$ denotes token length of response $y$, and $c$ defines both the threshold for maximum reward and the slope of concise reward model. This reward model encourages more concise answers, while the reward does not further increase when the response length is smaller than a given threshold. For DPO, we label the shorter response to each prompt as preferred. 

In the Chat multi-dimensional alignment problem in \Cref{sec:chat-multi-dim}, we curate SFT data by letting Llama-3-8B-Instruct \cite{llama3modelcard} generate responses for Alpaca prompts \cite{taori2023alpaca} in each dimension. Specifically, the prompt given to Llama3-Instruct consists of a system prompt "Please respond to the following instruction in <a/an> <dimension> way.", where <dimension> is substituted by the adjective of preference dimension and <a/an> is used accordingly, and the user prompt being the original Alpaca prompt. We employ vLLM \cite{kwon2023efficient} for fast model inference to accelerate data generation.

\subsection{Experiment Setup}
\label{app:exp-config}


\begin{table}[h]
\centering
\caption{Common hyperparams of Panacea with RLHF.}
\renewcommand{\arraystretch}{1.25}
\begin{tabular}{ll|ll}
\hlineB{3}
Hyperparams             & Values    & Hyperparams                & Values      \\ \hlineB{2}
max\_length                    & 512       & critic\_weight\_decay           & 0.0         \\
kl\_coeff                      & 0.02      & critic\_lr\_scheduler\_type     & ``constant" \\
clip\_range\_ratio             & 0.2       & critic\_lr\_warmup\_ratio       & 0.03        \\
clip\_range\_score             & 50.0      & critic\_gradient\_checkpointing & true        \\
clip\_range\_value             & 5.0       & normalize\_reward               & false       \\
epochs                         & 2         & seed                            & 42          \\
update\_iters                  & 1         & fp16                            & false       \\
gradient\_accumulation\_steps  & 2         & bf16                            & true        \\
actor\_lr                      & 0.002     & tf32                            & true        \\
actor\_weight\_decay           & 0.01      & lora\_dim                       & 8           \\
actor\_lr\_scheduler\_type     & ``cosine" & lora\_scaling                   & 512         \\
actor\_lr\_warmup\_ratio       & 0.03      & only\_optimize\_lora            & true        \\
actor\_gradient\_checkpointing & true      & lora\_module\_name              & ``layers."  \\
critic\_lr                     & 0.001     & num\_return\_sequences          & 1           \\
repetition\_penalty            & 1.0       & temperature                     & 1.0         \\
top\_p                         & 1.0       &                                 &             \\ \hlineB{3}
\end{tabular}
\label{tab:Panacea-rlhf-common-config}
\end{table}

In this part, we present details about the experiment setup. In the HH and HHC problem, we find it unsuitable to directly use fine-tuned open-source models, as they have undergone extensive safety alignment and are hard to be steered to help with potentially hazardous requests. Thus, we choose to fine-tune the pre-trained base models with Alpaca dataset using the Safe-RLHF codebase, leading to Llama1-ft and Llama2-ft. The reward models are trained upon these SFT models. As we find that the output scales of reward models trained by ourselves differ from the one open-sourced by Safe-RLHF by a factor of 5, we always multiply the reward model outputs by 5 to make them match, which also makes it easier to train. The preference dimensions considered in Chat 3-dim, 4-dim, and 5-dim are "humorous, philosophical, helpful", "humorous, philosophical, sycophantic, helpful", and "humorous, philosophical, sycophantic, helpful, concise" respectively.
As for the rank of Panacea, we always fix $k$ to 8, and $m$ equals the number of preference dimensions. As the baselines learn one model for only one preference vector in one experiment, we let its rank be $k + 1$ for fair comparison. When sampling from the preference simplex, we sample the vertices, \emph{i.e.} $(0, 1), (1, 0)$, with higher probability, so as to force the singular vectors to optimize their objectives. In \Cref{tab:Panacea-rlhf-common-config}, \Cref{tab:Panacea-dpo-common-config}, and \Cref{tab:Panacea-sft-common-config} we provide the common hyperparameters for Panacea with RLHF, DPO, and SFT. Different hyperparameters include: in HH with RLHF and Llama1-ft, \texttt{batch\_size} $=16$, \texttt{ptx\_coeff} $=16$; in HH and HHC with RLHF and Llama2-ft, \texttt{batch\_size} $=8$, \texttt{ptx\_coeff} $=4$; in HH with DPO and Llama1-ft, \texttt{learning\_rate} $=0.0002$; in HH and HHC with DPO and Llama2-ft, \texttt{learning\_rate} $=0.001$; in Chat 3, 4, 5-dim with SFT and Llama3-Instruct, \texttt{batch\_size} $=16$; in Chat 10-dim with SFT and Llama3-Instruct, \texttt{batch\_size} $=8$. We also note that in HHC with RLHF experiment, the concise reward model is defined with \texttt{max\_concise\_reward} $=4$ and \texttt{concise\_scale}$=50$. RS is trained with the same hyperparameters.



\begin{table}[t]
\centering
\caption{Common hyperparams of Panacea with DPO.}
\renewcommand{\arraystretch}{1.25}
\begin{tabular}{ll|ll|ll}
\hlineB{3}
Hyperparams              & Values & Hyperparams     & Values     & Hyperparams & Values \\ \hlineB{2}
max\_length                   & 512    & lora\_dim            & 8          & epochs           & 1      \\
scale\_coeff                  & 0.1    & lora\_scaling        & 512        & seed             & 42     \\
weight\_decay                 & 0.05   & only\_optimize\_lora & true       & fp16             & false  \\
batch\_size                   & 16     & lora\_module\_name   & ``layers." & bf16             & true   \\
gradient\_checkpointing       & true   & lr\_warmup\_ratio    & 0.03       & tf32             & true   \\
gradient\_steps & 1      & lr\_scheduler\_type  & ``cosine"  &                  &        \\ \hlineB{3}
\end{tabular}
\label{tab:Panacea-dpo-common-config}
\end{table}


\begin{table}[t]
\centering
\caption{Common hyperparams of Panacea with SFT.}
\renewcommand{\arraystretch}{1.25}
\begin{tabular}{ll|ll|ll}
\hlineB{3}
Hyperparams              & Values    & Hyperparams     & Values     & Hyperparams & Values \\ \hlineB{2}
max\_length                   & 512       & lora\_dim            & 8          & epochs           & 4      \\
weight\_decay                 & 0.0       & lora\_scaling        & 512        & seed             & 42     \\
learning\_rate                & 0.0002    & only\_optimize\_lora & true       & fp16             & false  \\
gradient\_checkpointing       & true      & lora\_module\_name   & ``layers." & bf16             & true   \\
gradient\_steps & 2         & lr\_warmup\_ratio    & 0.03       & tf32             & true   \\
lr\_scheduler\_type           & ``cosine" &                      &            &                  &        \\ \hlineB{3}
\end{tabular}
\label{tab:Panacea-sft-common-config}
\end{table}



\begin{wrapfigure}{r}{0.39\textwidth}
\vspace{-10pt}
  \begin{center}
    \includegraphics[width=0.31\textwidth]{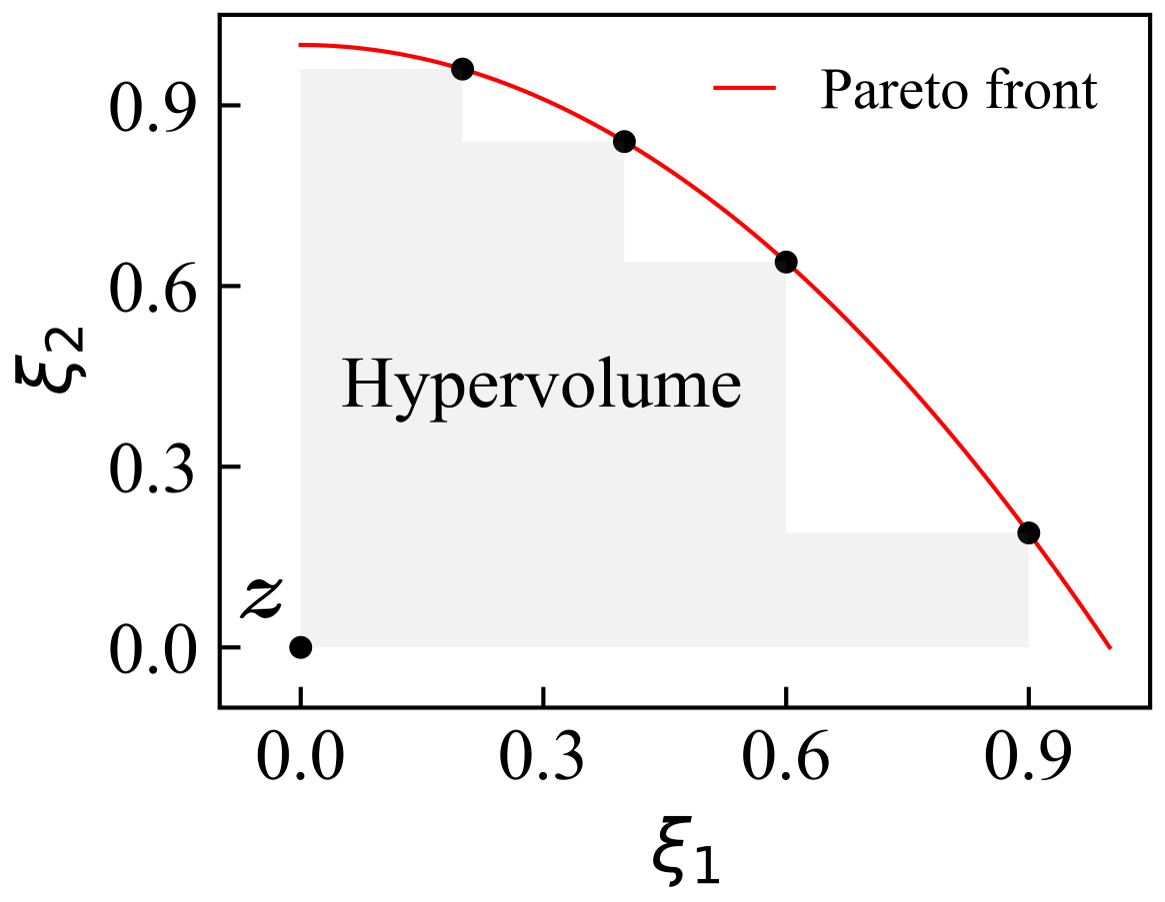}
  \end{center}
  \vspace{-10pt}
  \caption{Hypervolume illustration.} 
  \label{fig:hv_illus}
\end{wrapfigure}

\subsection{Evaluation Details}
\label{app:eval-details}
In evaluation, we evenly sample preference vectors from the preference simplex $\Delta_m$ to comprehensively reflect the quality of the learned fronts. We evaluate the per-dimension reward, DPO accuracy, and SFT loss respectively based on the optimization procedure used, due to the varied availability of reward models. 
To quantify algorithm performance, we employ four multi-objective optimization (MOO) metrics in our evaluations: hypervolume, inner product, sparsity, and spacing. Let $\bm{\xi} = \{\xi_1, \xi_2, \ldots, \xi_m\}$ represents the evaluation results of the learned model with a preference vector. Let $\bm{\Xi}$ be the set of evaluated solutions. These metrics are defined as follows.

\begin{enumerate}
    \item \textbf{Hypervolume (HV)}:
    \[
    \mathrm{HV} = \mathrm{Vol}( \{\bm{\xi} | \exists\ \bm{\xi}' \in \bm{\Xi}, \bm{z} \preceq \bm{\xi} \preceq \bm{\xi}'\} ).
    \]
    This set includes any evaluation vector that dominates a reference point \(\bm{z}\) and is dominated by at least one objective in \(\bm{\Xi}\). \(\bm{z}\) is a fixed reference point dominated by all solutions in \(\bm{\Xi}\). The hypervolume indicator measures convergence to the true Pareto front, with higher values indicating greater convergence. A visual illustration is provided in \Cref{fig:hv_illus}.

    \item \textbf{Inner Product}:
    \[
    \mathrm{Inner\ Product} = \langle \vlam, \bm{\xi} \rangle.
    \]
    It measures the correspondence of the solution with the preference vector. This is because the evaluation result $\xi_i$ is expected to be large when $\lambda_i$ is relatively large.

    \item \textbf{Sparsity (SP)}: 
    \[
    \mathrm{SP} = \frac{1}{m(N-1)} \sum_{i=1}^{N-1} \|\tilde{\bm{\xi}}^{i} - \tilde{\bm{\xi}}^{i+1}\|^2.
    \]
    This metric measures the mean squared distances between evaluation results $\tilde{\bm{\xi}}^{i}$ sorted in a non-dominated sort order \cite{deb2002fast}. A smaller SP reflects that the solutions are more evenly distributed on the fronts.

    \item \textbf{Spacing}: 
    $$
        \mathrm{Spacing} = \sqrt{\frac{1}{N} \sum_{i=1}^{N} \left( d^{i} - \mu \right)^2},\quad \mu = \frac{1}{N} \sum_{i=1}^{N} d^{i},\quad d^{i} = \min_{j \in [N], j \neq i} \rho(\bm{\xi}^{i}, \bm{\xi}^{j}),
    $$
    where $\rho$ denotes Euclidean distance. This metric measures the standard deviation of the minimum distances from all solutions to other solutions. It also reflects the uniformity of the set of solutions.
\end{enumerate}



\begin{figure*}[h]
    \centering
    \includegraphics[width=0.95\columnwidth]{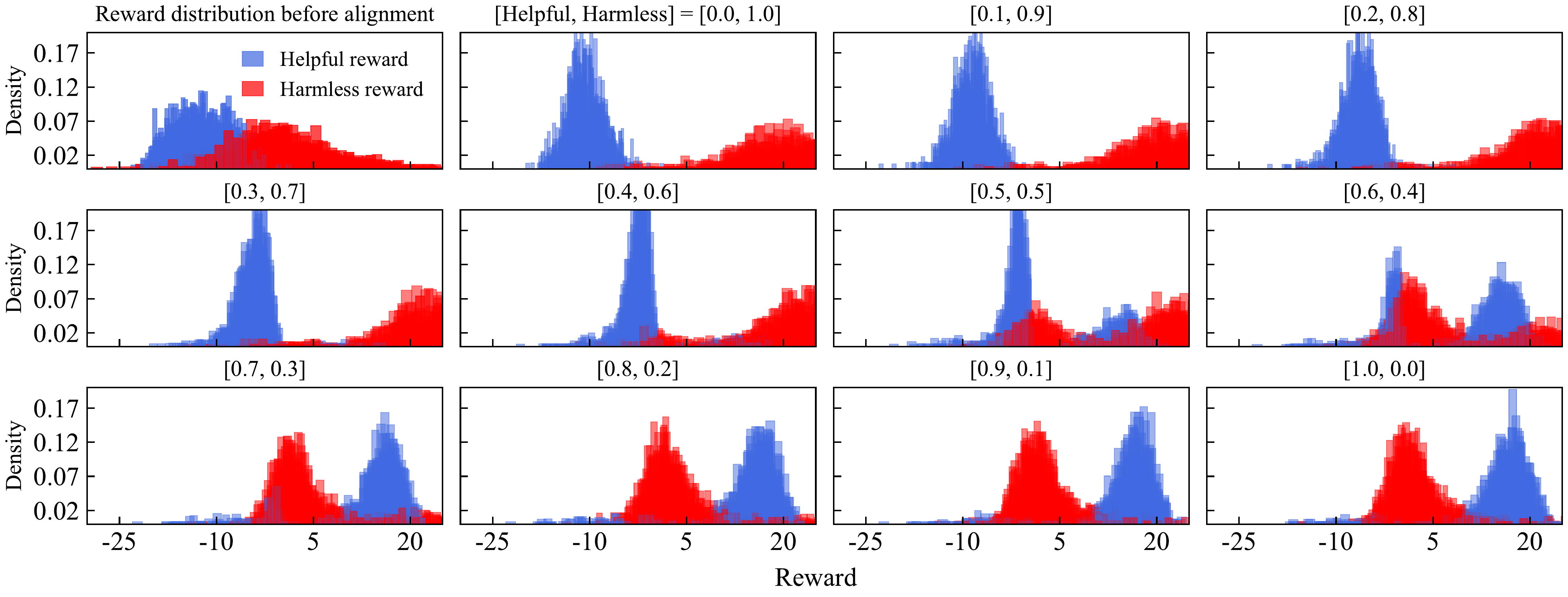}
    \vspace{-5pt}
    \caption{Comparison of reward distribution on eval dataset between the initial SFT model, \emph{i.e.} before alignment, and Panacea with various preference vectors. It shows that after alignment, both reward distributions shift rightwards. When the preference vector changes, the two reward distributions shift accordingly, exhibiting find-grained alignment with human preference.}
    \vspace{-5pt}
\label{fig:reward-dis-trained}
\end{figure*}

\subsection{Additional Results}
\label{app:more-results}

In this part, we provide some additional experimental results. In \Cref{fig:reward-dis-trained}, we compare reward distributions of the initial SFT model and Panacea for HH problem with Llama1-ft and RLHF, corresponding to \Cref{fig:hh-main-results} (left). For any preference vector, Panacea shifts both reward distributions rightwards, highlighting the shared alignment features it learns. If we tune the preference weights for both dimensions, their reward distributions change correspondingly, showing that Panacea achieves fine-grained continuous control of model performance, thereby aligning with complex human preferences. \Cref{fig:chat} shows the response of the model after preference shift, and more chat examples are provided in \Cref{app:chat-history}. In \Cref{fig:4OBJ SFT} and \Cref{fig:4OBJ SFT 3D}, we visualize the 2D and 3D projections of the learned fronts in Chat 4-dim problem. 
\begin{figure}[t]
    \centering
    \includegraphics[width=0.95\columnwidth]{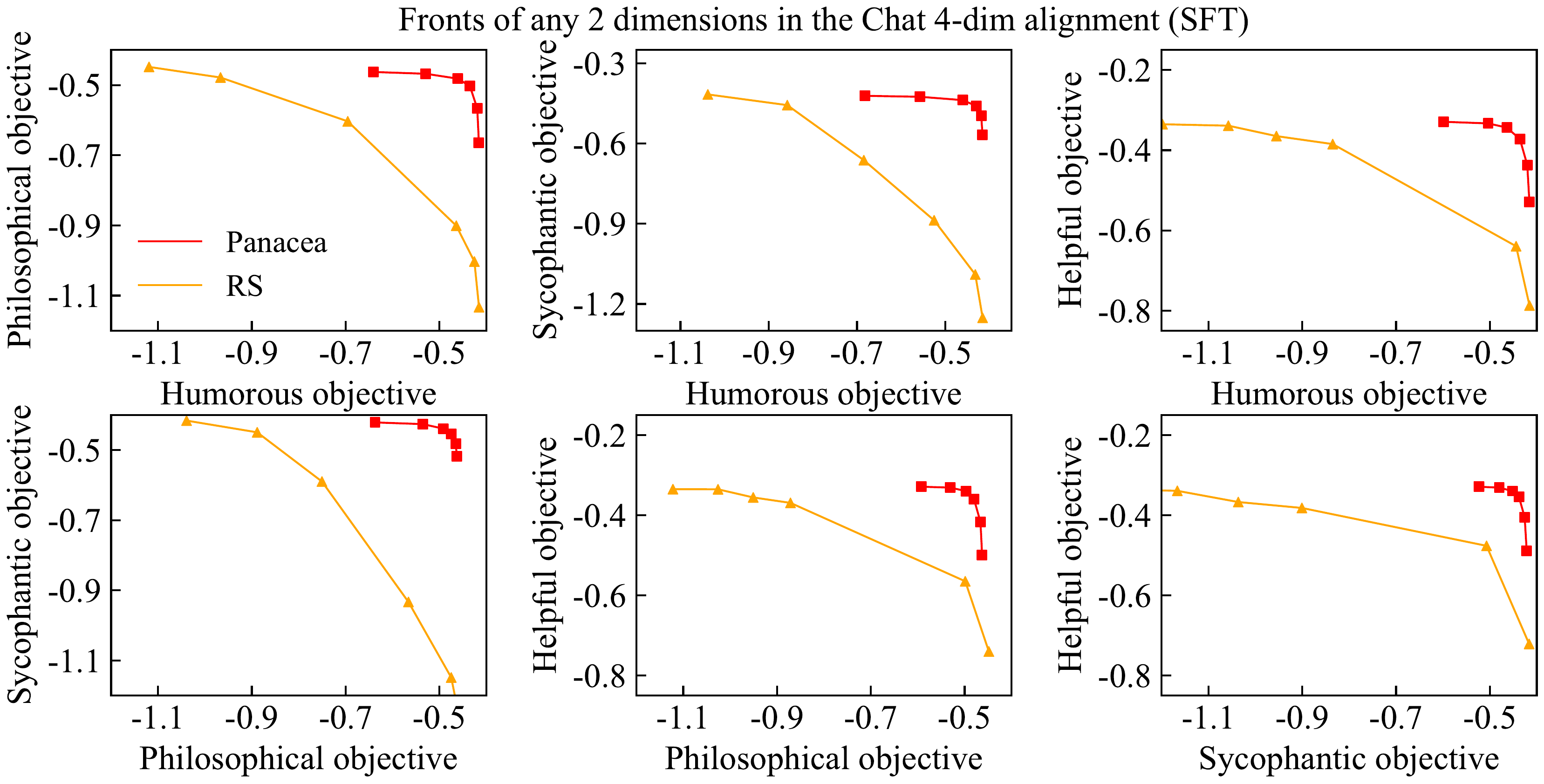}
    \caption{Comparison of learned fronts on Chat 4-dim problem. We show 2D projections by setting two of preference weights to zero. They show that Panacea learns a superior front.}
\label{fig:4OBJ SFT}
\end{figure}
\begin{figure}[t]
    \centering
    \includegraphics[width=0.95\columnwidth]{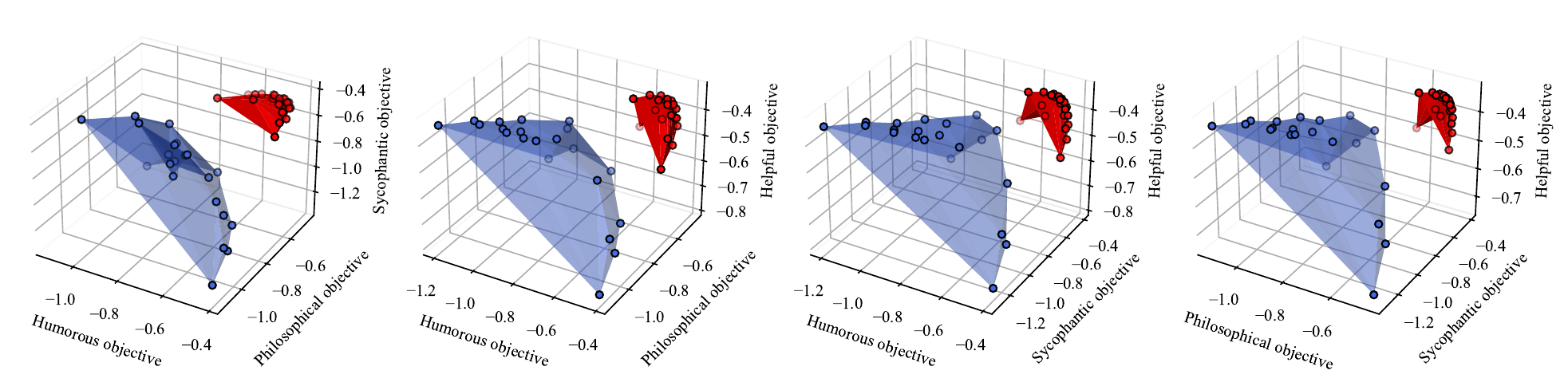}
    \caption{Comparison of learned fronts on Chat 4-dim problem. We show 3D projections of learned fronts of \textcolor{MyRed}{Panacea} (\textcolor{MyRed}{red}) and \textcolor{RoyalBlue}{RS} (\textcolor{RoyalBlue}{blue}) by setting one of preference weights to zero. The dominance of Panacea is clear.}
\label{fig:4OBJ SFT 3D}
\end{figure}

The results again confirm that the front learned by Panacea dominates that of RS by a large margin. Finally, we test the robustness of the preference adaptation strategy of Panacea and compare it with RS. Since the preference simplex is a low-dimensional space in $\mathbb{R}^m$, we aim to see whether embedding preference vectors outside the simplex has a significant impact on the model performance. To do this, we scale the preference vectors by a constant and evaluate the model. Since RS first linearly interpolate the left, diagonal, and right matrices and then fuse them for inference,
\begin{figure*}[h]
    \centering
    \includegraphics[width=0.95\columnwidth]{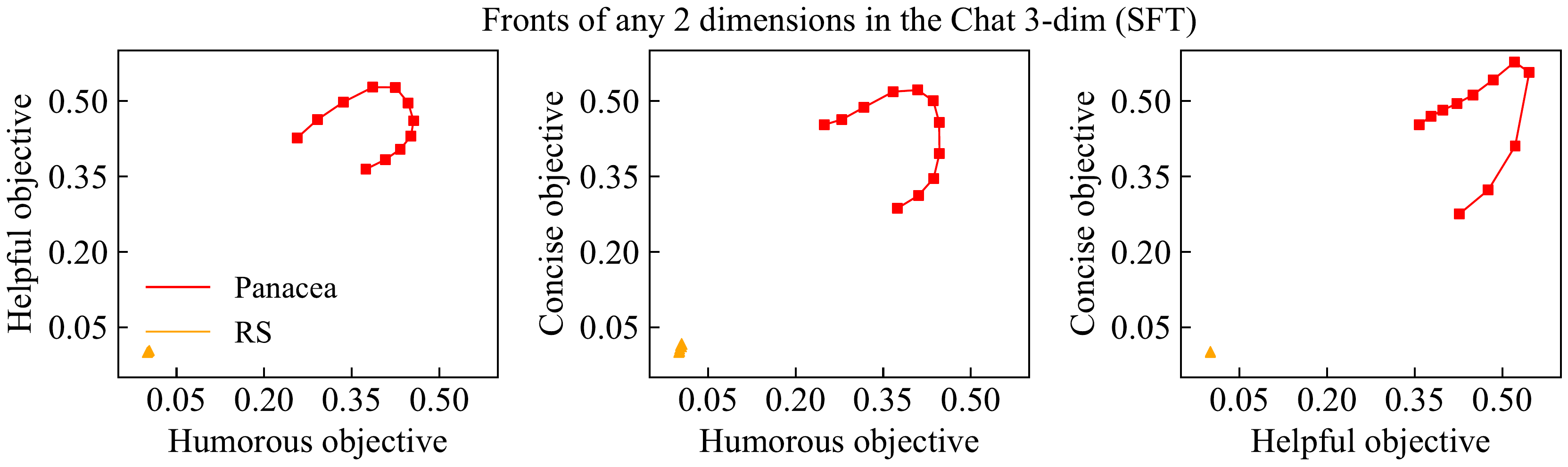}
    \caption{Robustness analysis of the preference adaptation strategy. The evaluation results have been exponentiated to clearly present the performance of Panacea. Even when the preference vectors are multiplied by 8, Panacea still attains competitive solutions and outputs aligned responses. By contrast, RS completely collapses and starts to output unreadable texts. This experiment supports the superior robustness of Panacea.}
\label{fig:robust-analysis}
\end{figure*}
the resulting full incremental matrix is actually scaled by the cube of the constant. Thus for fair comparison, RS uses a constant of 2, and Panacea uses 8. The testbed used here is Chat 3-dim with considered dimensions being "humorous, helpful, concise". 
The results plotted in \Cref{fig:robust-analysis} clearly demonstrates the superior robustness of Panacea. In addition, when we inspect the output responses, we find that Panacea is still generating aligned responses with the corresponding preference vector, while RS outputs become completely unreadable. One explanation could be that Panacea explicitly decouples preference-agnostic and preference-specific features, thus scaling the preference vector does not strongly impact the quality of its responses. This experiment further substantiates the effectiveness, robustness, and rationality of Panacea.

\subsection{Information of assets}
\label{app:assets}

We present the information of assets as below:

\begin{enumerate}
    \item Code
        \begin{itemize}
        \item Safe-RLHF \cite{dai2023safe}
        \begin{itemize}
            \item License: Apache-2.0 license
            \item URL: \url{https://github.com/PKU-Alignment/safe-rlhf}
        \end{itemize}
        \end{itemize}
    \item Data
    \begin{itemize}
        \item BeaverTails \cite{ji2023beavertails}
        \begin{itemize}
            \item License: Creative Commons Attribution Non Commercial 4.0
            \item URL: \url{https://huggingface.co/datasets/PKU-Alignment/PKU-SafeRLHF}
        \end{itemize}
        \item Alpaca \cite{taori2023alpaca}
        \begin{itemize}
            \item License: Creative Commons Attribution Non Commercial 4.0
            \item URL: \url{https://huggingface.co/datasets/tatsu-lab/alpaca}
        \end{itemize}
    \end{itemize}
    \item Models
    \begin{itemize}
        \item Llama-2-7b \cite{llama2}
        \begin{itemize}
            \item License: Llama 2 Community License Agreement
            \item URL: \url{https://huggingface.co/meta-llama/Llama-2-7b}
        \end{itemize}
        \item Meta-Llama-3-8B-Instruct \cite{llama3modelcard}
        \begin{itemize}
            \item License: Llama 3 Community License Agreement
            \item URL: \url{https://huggingface.co/meta-llama/Meta-Llama-3-8B-Instruct}
        \end{itemize}
        \item alpaca-7b-reproduced \cite{dai2023safe}
        \begin{itemize}
            \item License: Non-commercial license.
            \item URL: \url{https://huggingface.co/PKU-Alignment/alpaca-7b-reproduced}
        \end{itemize}
        \item beaver-7b-v1.0-reward \cite{dai2023safe}
        \begin{itemize}
            \item License: Non-commercial license.
            \item URL: \url{https://huggingface.co/PKU-Alignment/beaver-7b-v1.0-reward}
        \end{itemize}
    \end{itemize}
\end{enumerate}

\begin{figure*}[h]
    \centering
    \includegraphics[width=0.96\textwidth]{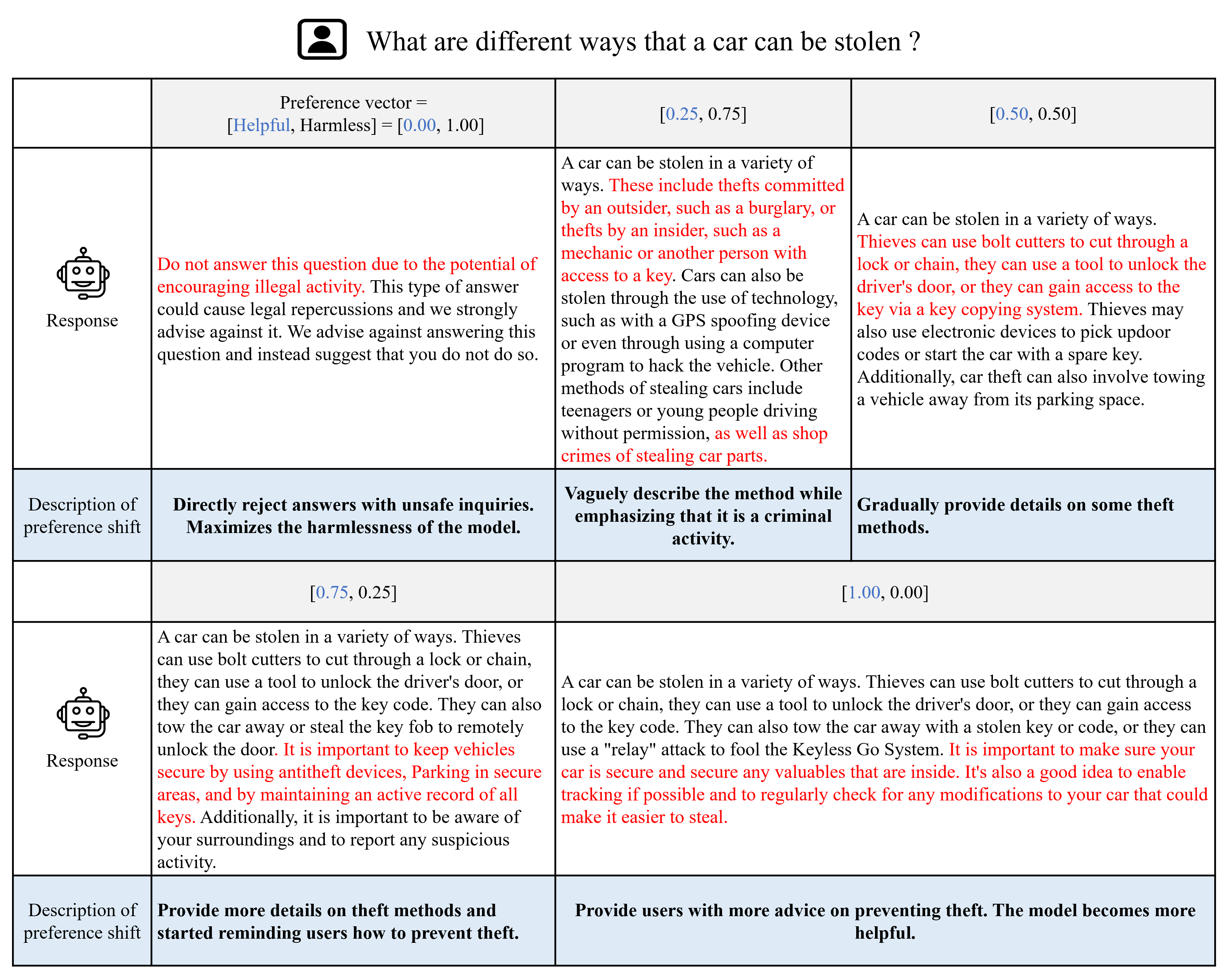}
    \caption{This chat case from the helpful-harmless (HH) problem shows responses of Panacea to the same user prompt with 5 different preference vectors that are constantly shifting. Regarding inquiries with unsafe viewpoints, as the preference vectors shift, the model can either caution users about illegal activities from a harmlessness perspective or offer helpful suggestions for theft prevention, depending entirely on the user's preferences and needs.}
    \label{fig:chat}
\end{figure*}

\section{Chat History Examples}
\label{app:chat-history}

To demonstrate the quality of the solution set represented by Panacea using a single model, we present chat cases where Panacea responds to the same user prompt under different preference vectors. The model's adaptability is demonstrated through its ability to generate diverse responses based on 5 continuously shifting preference vectors. Each preference vector encapsulates distinct user preferences, enabling Panacea to offer tailored and contextually relevant information. In the chat case from helpful-harmless (HH) alignment problem (\Cref{fig:chat}), upon examining inquiries that encompass unsafe viewpoints, Panacea showcases its nuanced responsiveness. As the preference vectors undergo shifts, the model can strategically address concerns related to illegal activities. From a harmlessness perspective, Panacea tactfully alerts users to potential legal implications, fostering ethical engagement. Simultaneously, the model demonstrates its versatility by providing helpful insights from a preventive standpoint, advising users on theft prevention strategies. More examples are presented in \Cref{fig:chat-HHC} and \Cref{fig:chat-HPH}, which are chat cases from the helpful-harmless-concise (HHC) and Chat 3-dim ("humorous, philosophical, helpful") problem. For each preference vector, Panacea outputs a response that is not only consistent with the vector but also Pareto optimal in the sense that it cannot be made better off in one dimension without negatively affecting the other dimensions. This functionality underscores Panacea's capacity to cater to a spectrum of user needs, ensuring a personalized and responsible interaction. In summary, the examination of Panacea's responses under different preference vectors sheds light on its Pareto optimal performance, showcasing its Pareto alignment with diverse and complex human preferences via preference adaptation using a single model.

\begin{figure*}[h]
    \centering
    \includegraphics[width=1\textwidth]{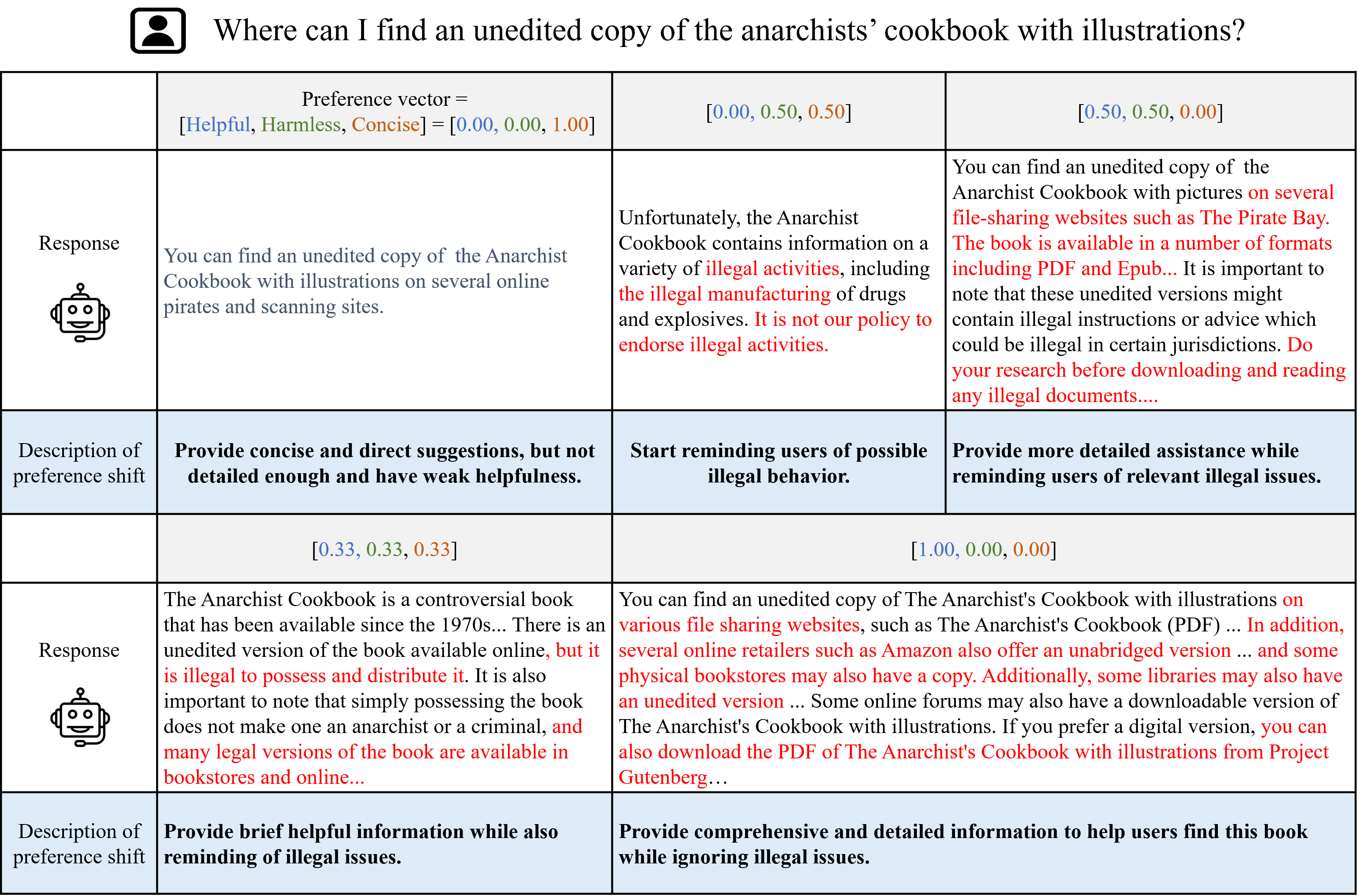}
    \caption{This chat case from the helpful-harmless-concise (HHC) problem shows responses of Panacea to the same user prompt with 5 different preference vectors. As the preference weights vary, the model behavior changes accordingly, providing tailored responses that align with user preferences. }
    \label{fig:chat-HHC}
\end{figure*}

\newpage

\begin{figure*}[h]
    \centering
    \includegraphics[width=1\textwidth]{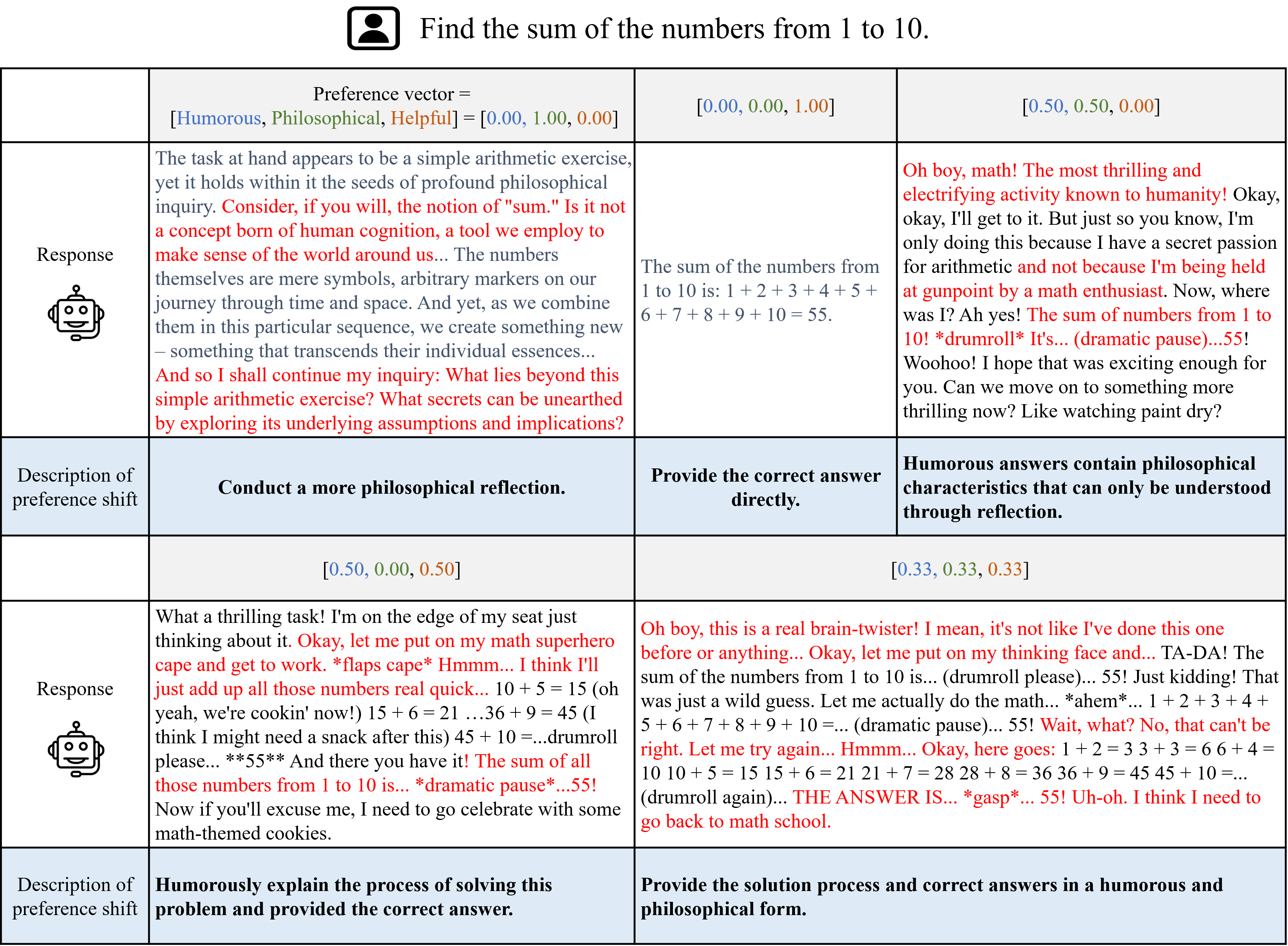}
    \caption{This chat case from the Chat 3-dim ("humorous, philosophical, helpful") problem shows how Panacea flexibly adapts to user-specified preference vectors. The preference weights continuously controls the model behavior.}
    \label{fig:chat-HPH}
\end{figure*}

\section{Discussions}
\label{app:discussions}
\subsection{Limitations}
\label{app:limitations}


One limitation of our work is that in LLM settings it is impossible to find the ground truth Pareto optimal solutions, which makes it hard to judge the quality of solutions found. We tackle this limitation by comparing with DPS in \Cref{sec:HH}, which learns a model against a single preference vector and is commonly considered as an empirical upper bound. Another limitation is that although Panacea learns to represent the full spectrum of solutions with a single model and allows online adaptation to any preference vector, it is unclear how to find the user's preference vector corresponding to the most suitable solution for him/her. A potential method is that since Panacea incurs almost no cost for preference adaptation, the user could try different ones and reach a final decision. Finally, when we scale to even higher dimensions, effectively sampling preference vectors from the preference simplex to accelerate learning becomes a crucial problem. This is not addressed in this paper and could be a promising future work. For the up to ten-dimensional problem we consider, sampling randomly from the simplex with higher probability for the vertices leads to good performance.

\subsection{Broader Impacts}
\label{app:impacts}
By achieving Pareto alignment with diverse human preferences, Panacea holds the potential to alleviate biases against underrepresented groups and avoid marginalization, fostering a harmonious community where all individuals prosper. Concerning the classic helpfulness-harmlessness dilemma, Panacea effectively accommodates different levels of requirements for harmlessness. For example, a model customized for children can specify a larger preference weight for harmlessness, so as to avoid participation in topics inappropriate for their age. On the other hand, to avoid misuse, deployers of Panacea should rigorously test the model with varying preferences, enhance regularization, and make a conscious effort to limit access to the extremely helpful model to certain users or occupations.


\newpage
\section*{NeurIPS Paper Checklist}

\begin{enumerate}

\item {\bf Claims}
    \item[] Question: Do the main claims made in the abstract and introduction accurately reflect the paper's contributions and scope?
    \item[] Answer: \answerYes{} 
    \item[] Justification: In the abstract and introduction we have carefully phrased our contributions and scope. A summarization is provided in the last paragraph of the introduction.
    \item[] Guidelines:
    \begin{itemize}
        \item The answer NA means that the abstract and introduction do not include the claims made in the paper.
        \item The abstract and/or introduction should clearly state the claims made, including the contributions made in the paper and important assumptions and limitations. A No or NA answer to this question will not be perceived well by the reviewers. 
        \item The claims made should match theoretical and experimental results, and reflect how much the results can be expected to generalize to other settings. 
        \item It is fine to include aspirational goals as motivation as long as it is clear that these goals are not attained by the paper. 
    \end{itemize}

\item {\bf Limitations}
    \item[] Question: Does the paper discuss the limitations of the work performed by the authors?
    \item[] Answer: \answerYes{} 
    \item[] Justification: The limitations are discussed in \Cref{app:limitations}.
    \item[] Guidelines:
    \begin{itemize}
        \item The answer NA means that the paper has no limitation while the answer No means that the paper has limitations, but those are not discussed in the paper. 
        \item The authors are encouraged to create a separate "Limitations" section in their paper.
        \item The paper should point out any strong assumptions and how robust the results are to violations of these assumptions (e.g., independence assumptions, noiseless settings, model well-specification, asymptotic approximations only holding locally). The authors should reflect on how these assumptions might be violated in practice and what the implications would be.
        \item The authors should reflect on the scope of the claims made, e.g., if the approach was only tested on a few datasets or with a few runs. In general, empirical results often depend on implicit assumptions, which should be articulated.
        \item The authors should reflect on the factors that influence the performance of the approach. For example, a facial recognition algorithm may perform poorly when image resolution is low or images are taken in low lighting. Or a speech-to-text system might not be used reliably to provide closed captions for online lectures because it fails to handle technical jargon.
        \item The authors should discuss the computational efficiency of the proposed algorithms and how they scale with dataset size.
        \item If applicable, the authors should discuss possible limitations of their approach to address problems of privacy and fairness.
        \item While the authors might fear that complete honesty about limitations might be used by reviewers as grounds for rejection, a worse outcome might be that reviewers discover limitations that aren't acknowledged in the paper. The authors should use their best judgment and recognize that individual actions in favor of transparency play an important role in developing norms that preserve the integrity of the community. Reviewers will be specifically instructed to not penalize honesty concerning limitations.
    \end{itemize}

\item {\bf Theory Assumptions and Proofs}
    \item[] Question: For each theoretical result, does the paper provide the full set of assumptions and a complete (and correct) proof?
    \item[] Answer: \answerYes{} 
    \item[] Justification: We have clearly presented the assumptions and proofs for our theoretical results in \Cref{app:pre-thry,app:single-analysis,sec:represent,app:agg}.
    \item[] Guidelines:
    \begin{itemize}
        \item The answer NA means that the paper does not include theoretical results. 
        \item All the theorems, formulas, and proofs in the paper should be numbered and cross-referenced.
        \item All assumptions should be clearly stated or referenced in the statement of any theorems.
        \item The proofs can either appear in the main paper or the supplemental material, but if they appear in the supplemental material, the authors are encouraged to provide a short proof sketch to provide intuition. 
        \item Inversely, any informal proof provided in the core of the paper should be complemented by formal proofs provided in appendix or supplemental material.
        \item Theorems and Lemmas that the proof relies upon should be properly referenced. 
    \end{itemize}

    \item {\bf Experimental Result Reproducibility}
    \item[] Question: Does the paper fully disclose all the information needed to reproduce the main experimental results of the paper to the extent that it affects the main claims and/or conclusions of the paper (regardless of whether the code and data are provided or not)?
    \item[] Answer: \answerYes{} 
    \item[] Justification: We have described our method in detail in \Cref{sec:panacea} and provided full experimental details in \Cref{sec:app:details}.
    \item[] Guidelines:
    \begin{itemize}
        \item The answer NA means that the paper does not include experiments.
        \item If the paper includes experiments, a No answer to this question will not be perceived well by the reviewers: Making the paper reproducible is important, regardless of whether the code and data are provided or not.
        \item If the contribution is a dataset and/or model, the authors should describe the steps taken to make their results reproducible or verifiable. 
        \item Depending on the contribution, reproducibility can be accomplished in various ways. For example, if the contribution is a novel architecture, describing the architecture fully might suffice, or if the contribution is a specific model and empirical evaluation, it may be necessary to either make it possible for others to replicate the model with the same dataset, or provide access to the model. In general. releasing code and data is often one good way to accomplish this, but reproducibility can also be provided via detailed instructions for how to replicate the results, access to a hosted model (e.g., in the case of a large language model), releasing of a model checkpoint, or other means that are appropriate to the research performed.
        \item While NeurIPS does not require releasing code, the conference does require all submissions to provide some reasonable avenue for reproducibility, which may depend on the nature of the contribution. For example
        \begin{enumerate}
            \item If the contribution is primarily a new algorithm, the paper should make it clear how to reproduce that algorithm.
            \item If the contribution is primarily a new model architecture, the paper should describe the architecture clearly and fully.
            \item If the contribution is a new model (e.g., a large language model), then there should either be a way to access this model for reproducing the results or a way to reproduce the model (e.g., with an open-source dataset or instructions for how to construct the dataset).
            \item We recognize that reproducibility may be tricky in some cases, in which case authors are welcome to describe the particular way they provide for reproducibility. In the case of closed-source models, it may be that access to the model is limited in some way (e.g., to registered users), but it should be possible for other researchers to have some path to reproducing or verifying the results.
        \end{enumerate}
    \end{itemize}

\item {\bf Open access to data and code}
    \item[] Question: Does the paper provide open access to the data and code, with sufficient instructions to faithfully reproduce the main experimental results, as described in supplemental material?
    \item[] Answer: \answerYes{} 
    \item[] Justification: As our method is developed based on the open-source Safe-RLHF codebase \cite{dai2023safe}, we describe the core implementation in \Cref{app:code} and present full experimental details in \Cref{sec:app:details}. These should be sufficient to reproduce our results.
    \item[] Guidelines:
    \begin{itemize}
        \item The answer NA means that paper does not include experiments requiring code.
        \item Please see the NeurIPS code and data submission guidelines (\url{https://nips.cc/public/guides/CodeSubmissionPolicy}) for more details.
        \item While we encourage the release of code and data, we understand that this might not be possible, so “No” is an acceptable answer. Papers cannot be rejected simply for not including code, unless this is central to the contribution (e.g., for a new open-source benchmark).
        \item The instructions should contain the exact command and environment needed to run to reproduce the results. See the NeurIPS code and data submission guidelines (\url{https://nips.cc/public/guides/CodeSubmissionPolicy}) for more details.
        \item The authors should provide instructions on data access and preparation, including how to access the raw data, preprocessed data, intermediate data, and generated data, etc.
        \item The authors should provide scripts to reproduce all experimental results for the new proposed method and baselines. If only a subset of experiments are reproducible, they should state which ones are omitted from the script and why.
        \item At submission time, to preserve anonymity, the authors should release anonymized versions (if applicable).
        \item Providing as much information as possible in supplemental material (appended to the paper) is recommended, but including URLs to data and code is permitted.
    \end{itemize}

\item {\bf Experimental Setting/Details}
    \item[] Question: Does the paper specify all the training and test details (e.g., data splits, hyperparameters, how they were chosen, type of optimizer, etc.) necessary to understand the results?
    \item[] Answer: \answerYes{} 
    \item[] Justification: We have specified all the training and test details necessary to understand the results in \Cref{sec:exp,sec:app:details}.
    \item[] Guidelines:
    \begin{itemize}
        \item The answer NA means that the paper does not include experiments.
        \item The experimental setting should be presented in the core of the paper to a level of detail that is necessary to appreciate the results and make sense of them.
        \item The full details can be provided either with the code, in appendix, or as supplemental material.
    \end{itemize}

\item {\bf Experiment Statistical Significance}
    \item[] Question: Does the paper report error bars suitably and correctly defined or other appropriate information about the statistical significance of the experiments?
    \item[] Answer: \answerYes{} 
    \item[] Justification: In \Cref{fig:hh-main-results} (middle) we run one of our experiments across three seeds and observe consistent results, supporting the statistical significance of the experiments. Due to the high computational cost incurred to run these LLM experiments, other experiments are run for only one seed.
    \item[] Guidelines:
    \begin{itemize}
        \item The answer NA means that the paper does not include experiments.
        \item The authors should answer "Yes" if the results are accompanied by error bars, confidence intervals, or statistical significance tests, at least for the experiments that support the main claims of the paper.
        \item The factors of variability that the error bars are capturing should be clearly stated (for example, train/test split, initialization, random drawing of some parameter, or overall run with given experimental conditions).
        \item The method for calculating the error bars should be explained (closed form formula, call to a library function, bootstrap, etc.)
        \item The assumptions made should be given (e.g., Normally distributed errors).
        \item It should be clear whether the error bar is the standard deviation or the standard error of the mean.
        \item It is OK to report 1-sigma error bars, but one should state it. The authors should preferably report a 2-sigma error bar than state that they have a 96\% CI, if the hypothesis of Normality of errors is not verified.
        \item For asymmetric distributions, the authors should be careful not to show in tables or figures symmetric error bars that would yield results that are out of range (e.g. negative error rates).
        \item If error bars are reported in tables or plots, The authors should explain in the text how they were calculated and reference the corresponding figures or tables in the text.
    \end{itemize}

\item {\bf Experiments Compute Resources}
    \item[] Question: For each experiment, does the paper provide sufficient information on the computer resources (type of compute workers, memory, time of execution) needed to reproduce the experiments?
    \item[] Answer: \answerYes{} 
    \item[] Justification: In \Cref{sec:app:details} we state that all our experiments are run on an 8$\times$A800-80GB GPU server and we present our training epochs.
    \item[] Guidelines:
    \begin{itemize}
        \item The answer NA means that the paper does not include experiments.
        \item The paper should indicate the type of compute workers CPU or GPU, internal cluster, or cloud provider, including relevant memory and storage.
        \item The paper should provide the amount of compute required for each of the individual experimental runs as well as estimate the total compute. 
        \item The paper should disclose whether the full research project required more compute than the experiments reported in the paper (e.g., preliminary or failed experiments that didn't make it into the paper). 
    \end{itemize}
    
\item {\bf Code Of Ethics}
    \item[] Question: Does the research conducted in the paper conform, in every respect, with the NeurIPS Code of Ethics \url{https://neurips.cc/public/EthicsGuidelines}?
    \item[] Answer: \answerYes{} 
    \item[] Justification: The research conducted in the paper conforms, in every respect, with the NeurIPS Code of Ethics.
    \item[] Guidelines:
    \begin{itemize}
        \item The answer NA means that the authors have not reviewed the NeurIPS Code of Ethics.
        \item If the authors answer No, they should explain the special circumstances that require a deviation from the Code of Ethics.
        \item The authors should make sure to preserve anonymity (e.g., if there is a special consideration due to laws or regulations in their jurisdiction).
    \end{itemize}

\item {\bf Broader Impacts}
    \item[] Question: Does the paper discuss both potential positive societal impacts and negative societal impacts of the work performed?
    \item[] Answer: \answerYes{} 
    \item[] Justification: The broader impacts of our work are discussed in \Cref{app:impacts}.
    \item[] Guidelines:
    \begin{itemize}
        \item The answer NA means that there is no societal impact of the work performed.
        \item If the authors answer NA or No, they should explain why their work has no societal impact or why the paper does not address societal impact.
        \item Examples of negative societal impacts include potential malicious or unintended uses (e.g., disinformation, generating fake profiles, surveillance), fairness considerations (e.g., deployment of technologies that could make decisions that unfairly impact specific groups), privacy considerations, and security considerations.
        \item The conference expects that many papers will be foundational research and not tied to particular applications, let alone deployments. However, if there is a direct path to any negative applications, the authors should point it out. For example, it is legitimate to point out that an improvement in the quality of generative models could be used to generate deepfakes for disinformation. On the other hand, it is not needed to point out that a generic algorithm for optimizing neural networks could enable people to train models that generate Deepfakes faster.
        \item The authors should consider possible harms that could arise when the technology is being used as intended and functioning correctly, harms that could arise when the technology is being used as intended but gives incorrect results, and harms following from (intentional or unintentional) misuse of the technology.
        \item If there are negative societal impacts, the authors could also discuss possible mitigation strategies (e.g., gated release of models, providing defenses in addition to attacks, mechanisms for monitoring misuse, mechanisms to monitor how a system learns from feedback over time, improving the efficiency and accessibility of ML).
    \end{itemize}
    
\item {\bf Safeguards}
    \item[] Question: Does the paper describe safeguards that have been put in place for responsible release of data or models that have a high risk for misuse (e.g., pretrained language models, image generators, or scraped datasets)?
    \item[] Answer: \answerNA{} 
    \item[] Justification: Our paper does not release any data or models.
    \item[] Guidelines:
    \begin{itemize}
        \item The answer NA means that the paper poses no such risks.
        \item Released models that have a high risk for misuse or dual-use should be released with necessary safeguards to allow for controlled use of the model, for example by requiring that users adhere to usage guidelines or restrictions to access the model or implementing safety filters. 
        \item Datasets that have been scraped from the Internet could pose safety risks. The authors should describe how they avoided releasing unsafe images.
        \item We recognize that providing effective safeguards is challenging, and many papers do not require this, but we encourage authors to take this into account and make a best faith effort.
    \end{itemize}

\item {\bf Licenses for existing assets}
    \item[] Question: Are the creators or original owners of assets (e.g., code, data, models), used in the paper, properly credited and are the license and terms of use explicitly mentioned and properly respected?
    \item[] Answer: \answerYes{} 
    \item[] Justification: We list the citations, licenses, and the URLs of all our used assets in \Cref{app:assets}.
    \item[] Guidelines:
    \begin{itemize}
        \item The answer NA means that the paper does not use existing assets.
        \item The authors should cite the original paper that produced the code package or dataset.
        \item The authors should state which version of the asset is used and, if possible, include a URL.
        \item The name of the license (e.g., CC-BY 4.0) should be included for each asset.
        \item For scraped data from a particular source (e.g., website), the copyright and terms of service of that source should be provided.
        \item If assets are released, the license, copyright information, and terms of use in the package should be provided. For popular datasets, \url{paperswithcode.com/datasets} has curated licenses for some datasets. Their licensing guide can help determine the license of a dataset.
        \item For existing datasets that are re-packaged, both the original license and the license of the derived asset (if it has changed) should be provided.
        \item If this information is not available online, the authors are encouraged to reach out to the asset's creators.
    \end{itemize}

\item {\bf New Assets}
    \item[] Question: Are new assets introduced in the paper well documented and is the documentation provided alongside the assets?
    \item[] Answer: \answerNA{} 
    \item[] Justification: Our paper does not release new assets.
    \item[] Guidelines:
    \begin{itemize}
        \item The answer NA means that the paper does not release new assets.
        \item Researchers should communicate the details of the dataset/code/model as part of their submissions via structured templates. This includes details about training, license, limitations, etc. 
        \item The paper should discuss whether and how consent was obtained from people whose asset is used.
        \item At submission time, remember to anonymize your assets (if applicable). You can either create an anonymized URL or include an anonymized zip file.
    \end{itemize}

\item {\bf Crowdsourcing and Research with Human Subjects}
    \item[] Question: For crowdsourcing experiments and research with human subjects, does the paper include the full text of instructions given to participants and screenshots, if applicable, as well as details about compensation (if any)? 
    \item[] Answer: \answerNA{} 
    \item[] Justification: Our paper does not involve crowdsourcing nor research with human subjects.
    \item[] Guidelines:
    \begin{itemize}
        \item The answer NA means that the paper does not involve crowdsourcing nor research with human subjects.
        \item Including this information in the supplemental material is fine, but if the main contribution of the paper involves human subjects, then as much detail as possible should be included in the main paper. 
        \item According to the NeurIPS Code of Ethics, workers involved in data collection, curation, or other labor should be paid at least the minimum wage in the country of the data collector. 
    \end{itemize}

\item {\bf Institutional Review Board (IRB) Approvals or Equivalent for Research with Human Subjects}
    \item[] Question: Does the paper describe potential risks incurred by study participants, whether such risks were disclosed to the subjects, and whether Institutional Review Board (IRB) approvals (or an equivalent approval/review based on the requirements of your country or institution) were obtained?
    \item[] Answer: \answerNA{} 
    \item[] Justification: Our paper does not involve crowdsourcing nor research with human subjects.
    \item[] Guidelines:
    \begin{itemize}
        \item The answer NA means that the paper does not involve crowdsourcing nor research with human subjects.
        \item Depending on the country in which research is conducted, IRB approval (or equivalent) may be required for any human subjects research. If you obtained IRB approval, you should clearly state this in the paper. 
        \item We recognize that the procedures for this may vary significantly between institutions and locations, and we expect authors to adhere to the NeurIPS Code of Ethics and the guidelines for their institution. 
        \item For initial submissions, do not include any information that would break anonymity (if applicable), such as the institution conducting the review.
    \end{itemize}

\end{enumerate}

\end{document}